\newcommand{\IR}{\mathbb{R}}
\newcommand{\IS}{\mathbb{S}}
\newcommand{\cC}{\mathcal{C}}
\newcommand{\cE}{\mathcal{E}}
\newcommand{\cF}{\mathcal{F}}
\newcommand{\cG}{\mathcal{G}}
\newcommand{\cO}{\mathcal{O}}
\newcommand{\cR}{\mathcal{R}}
\newcommand{\cV}{\mathcal{V}}
\newcommand{\ds}{\text{ds}}
\newcommand{\dx}{\text{dx}}
\newcommand{\norm}[1]{\ensuremath{\left\|#1\right\|}}
\newcommand{\dist}{\text{dist}}
\newcommand{\diam}{diam}
\newcommand{\length}{length}
\renewcommand{\phi}{\varphi}
\newcommand{\ie}{\emph{i.e.}}
\newcommand{\etal}{\emph{et al.}}
\newcommand{\zorah}[1]{\textcolor{black}{#1}}
\begin{document}

\title*{Efficient 2D-to-3D Deformable Shape Matching for 3D Shape Retrieval Applications}
\titlerunning{Efficient 2D-to-3D Deformable Shape Matching}
% Use \titlerunning{Short Title} for an abbreviated version of
% your contribution title if the original one is too long
\author{Zorah L\"ahner, Emanuele Rodol\`{a}, Frank R. Schmidt, Michael M. Bronstein and Daniel Cremers}
\authorrunning{L\"ahner et al.}
% Use \authorrunning{Short Title} for an abbreviated version of
% your contribution title if the original one is too long
\institute{Zorah L\"ahner \at Universit\"at Siegen, Department Elektrotechnik und Informatik, H\"olderlinstr. 3, 57076 Siegen, Germany, \email{zorah.laehner@uni-siegen.de}
\and Emanuele Rodol\`a \at
Sapienza Universit\`a di Roma, Dipartimento di Informatica, Via Salaria 113 III Piano, 00198 Roma, Italy, \email{rodola@di.uniroma1.it}
\and Frank R. Schmidt \at Bosch Center for Artifical Intelligence,
Robert-Bosch-Campus 1, 71272 Renningen, Germany, \email{frank.r.schmidt@de.bosch.com}
\and Michael M. Bronstein \at
Universit\`a della Svizzera Italiana, Faculty of Informatics SI-109,
Via Giuseppe Buffi 13, 6904 Lugano, Switzerland, \email{michael.bronstein@usi.ch}
\and Daniel Cremers \at
Technische Universit\"at M\"unchen, Informatik 9, Boltzmannstrasse 3, 85748 Garching, Germany, \email{cremers@tum.de}
}
%
% Use the package "url.sty" to avoid
% problems with special characters
% used in your e-mail or web address

  \maketitle

\abstract{We discuss an algorithm for non-rigid 2D-to-3D shape matching, where
  the input is a 2D query shape as well as a 3D target shape and the
  output is a continuous matching curve represented as a closed
  contour on the 3D shape. We cast the problem as finding the shortest
  circular path on the product 3-manifold of the two shapes.  We prove
  that the optimal matching can be computed in polynomial time with a
  (worst-case) complexity of $\cO(mn^2\log(n))$ where $m$ and $n$
  denote the number of vertices on the 2D and the 3D shape
  respectively. Computing a solution with a relative error of $\epsilon$
  has the same complexity but
  ensures faster convergence in some cases.
  Quantitative evaluation confirms that the method provides excellent
  results for sketch-based deformable 3D shape retrieval.}

  \section{Introduction}\label{sec:introduction}

  The last decade has witnessed a tremendous growth in the quantity and
  quality of geometric data available in the public domain. One of the
  driving forces of this growth has been the development in 3D sensing
  and printing technology, bringing affordable sensors such as Microsoft
  Kinect or Intel RealSense and 3D printers such as MakerBot to the mass
  market.
  The availability of large geometric datasets brings forth the need to
  explore, organize, and search in 3D shape collections, ideally in the
  same easy and efficient way as modern search engines allow to process
  text documents.
  \begin{figure}[t]
  \begin{center}
    \includegraphics[width=0.24\linewidth]{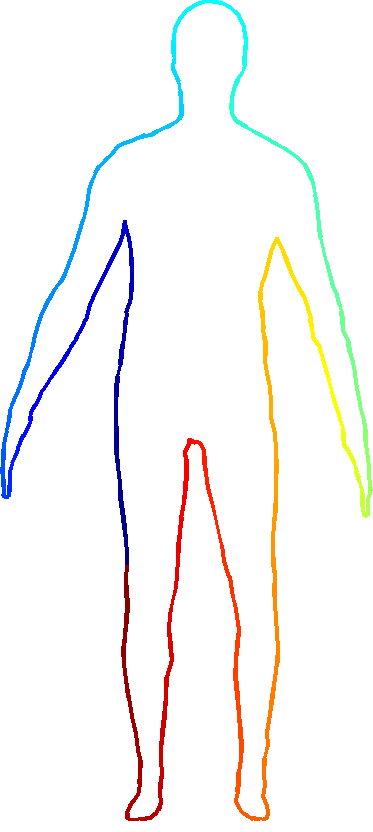}\hspace{8pt}
    \includegraphics[width=0.31\linewidth]{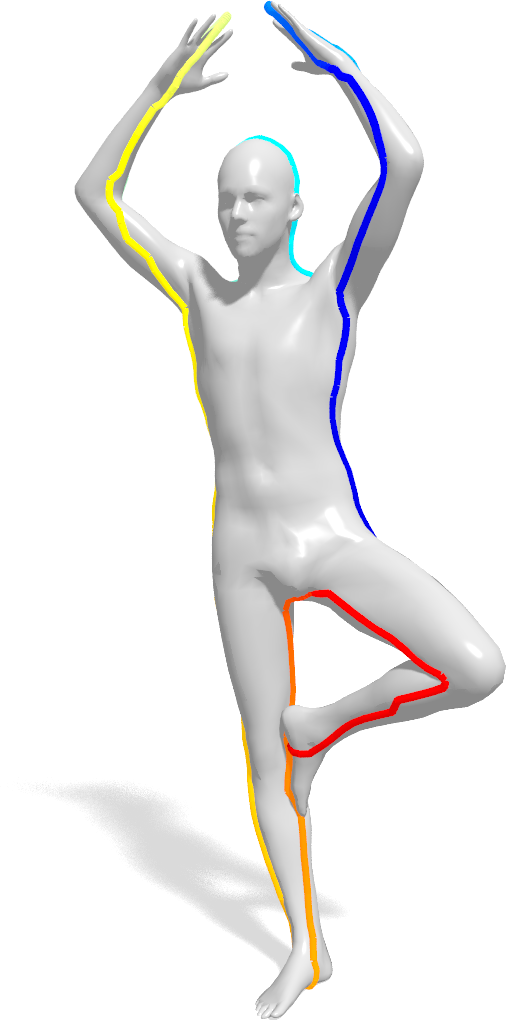}
    \includegraphics[width=0.31\linewidth]{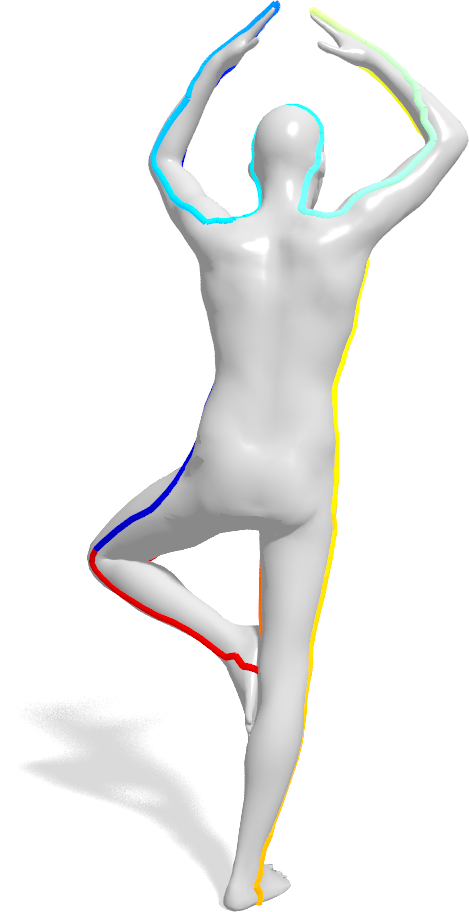}
  \end{center}
  \caption{\label{fig:teaser}We propose a shape matching method
    between a 2D query shape~(left) and a 3D target shape~(right), both
    of which are allowed to deform non-rigidly.  The globally optimal
    matching~(shown on top of the 3D target) is guaranteed to be
    continuous.}
  \end{figure}

  Numerous works on {\em content-based shape retrieval}
  \cite{funkhouser2003search,tangelder2008survey,bronstein2011shape} try
  to extend popular search paradigms to a setting where the 3D query
  shape is matched to shapes in the database using some criterion of
  geometric similarities.
  Typically, a 3D shape is represented as a descriptor vector
  aggregating some local geometric features, and retrieval is done
  efficiently by comparing such vectors \cite{bronstein2011shape}.
  However, the need for the query to be a 3D shape significantly limits
  the practical usefulness of such search engines: non-expert human
  users are typically not very skilled with 3D modeling, and thus
  providing a good query example can be challenging.
  As an alternative to 3D-to-3D shape retrieval, several recent works
  proposed 2D-to-3D or {\em sketch-based} shape retrieval, where the
  query is a 2D image representing the projection or the silhouette of a
  3D shape as seen from some viewpoint
  \cite{eitz2012sbsr,furuya14,Li20151,su15mvcnn,hueting2015crosslink}.
  This setting is much more natural to human users who in most cases are
  capable of sketching a 2D drawing of the query shape; however, the
  underlying problem of `multi-modal' similarity between a 3D object and
  its 2D representation is a very challenging one, especially if one
  desires to deal with non-rigid shapes such as human body poses. In
  fact, so far all methods for 2D-to-3D matching have limited the
  attention to rigid shapes such as chairs, cars, etc.

  \begin{figure*}[t]
  \begin{center}
    \includegraphics[width=0.25\linewidth]{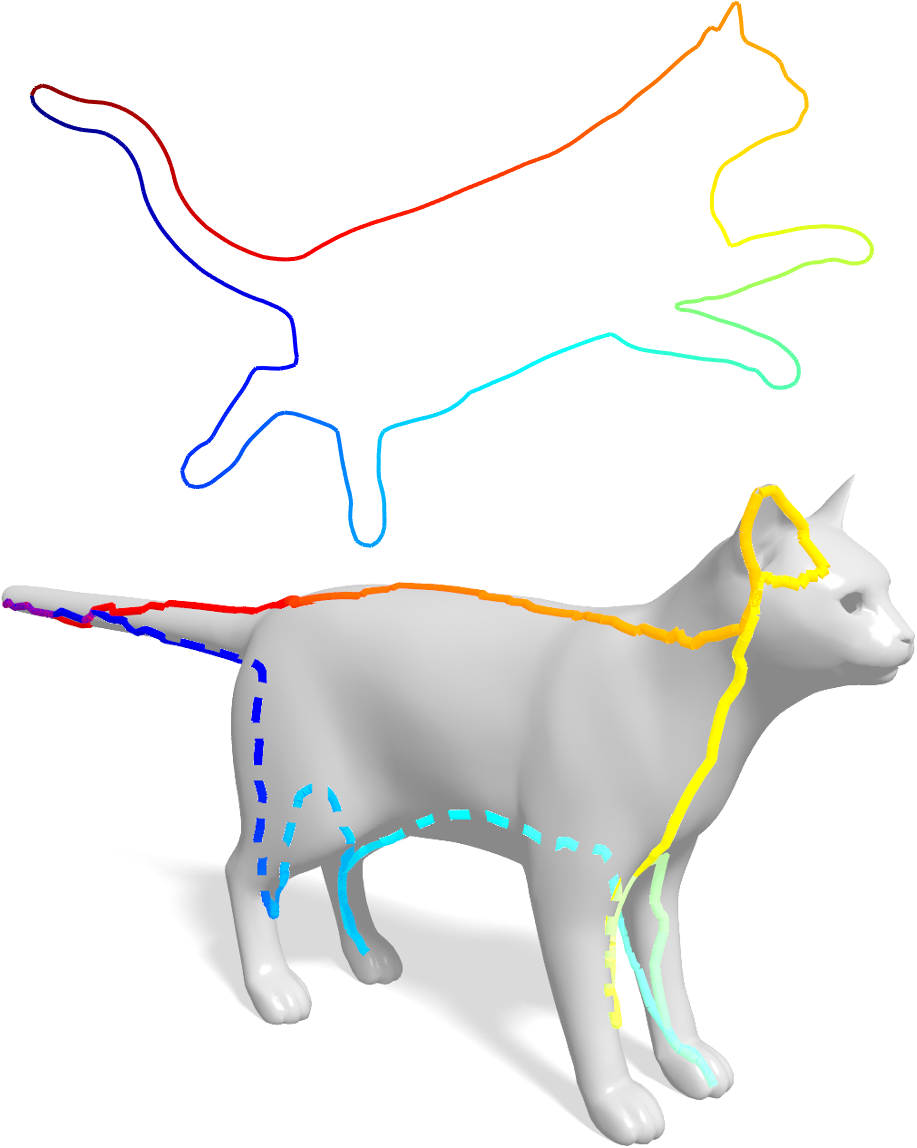}\hspace{12pt}
    \includegraphics[width=0.16\linewidth]{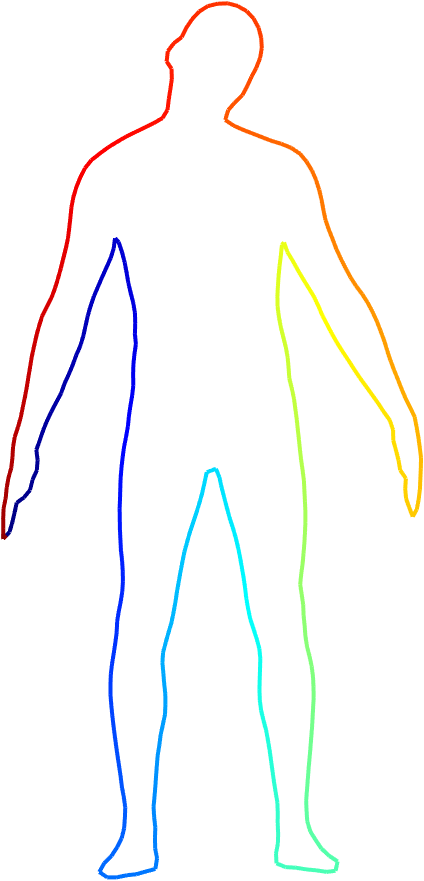}
    \includegraphics[width=0.147\linewidth]{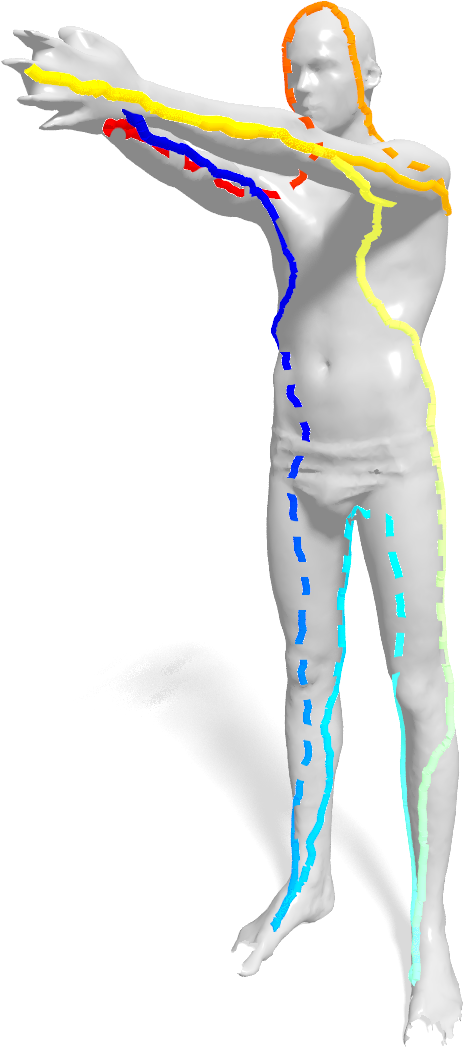}\hspace{12pt}
    \includegraphics[width=0.189\linewidth]{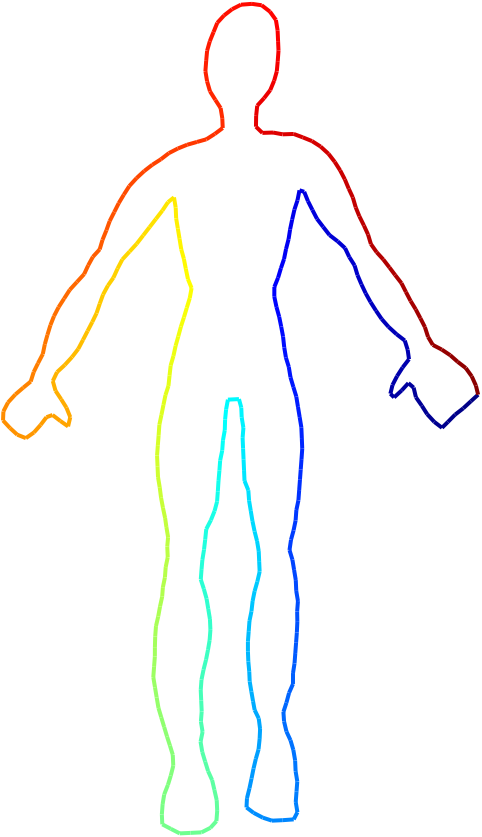}
    \includegraphics[width=0.16\linewidth]{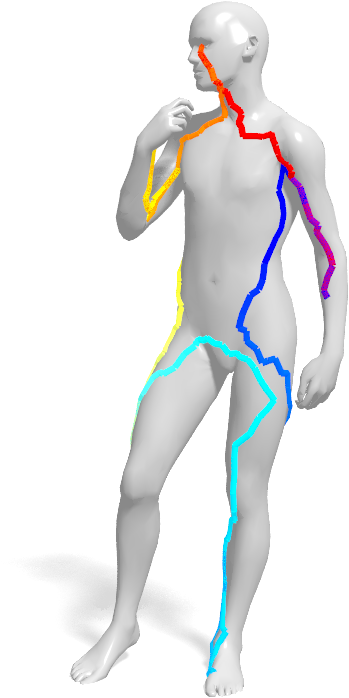}
  \end{center}
  \caption{\label{fig:example_matches}
  Examples of correspondences between a cat contour and a 3D cat (left) and human contours and 3D humans (middle and right).
  The human shape (middle) exhibit a topological change along the
    hands which is handled well by our method. The contour on the right is handdrawn.
    The dashed lines denotes correspondences that are normally not visible from this
    point of view. }
  \end{figure*}

  Here, we propose a method for automatically finding
  correspondence between 2D and 3D deformable objects.
  To the best of our knowledge, this is the first method to address the
  problem in the deformable setting.
  Prior approaches to the 2D-to-3D matching problem were limited to the
  less challenging rigid setting.
  The input to our algorithm is a 2D query curve and a 3D target
  surface, and the output is the corresponding continuous curve on the
  surface (see Fig.~\ref{fig:teaser}). Our method can either guarantee a globally
  optimal solution or an $\epsilon$-approximation of it, and has polynomial time
  complexity. The $\epsilon$-tight
  solution comes with decreased runtime and slightly less precision but
  this does not actually corrupt the retrieval results too much. In practice, we observe computationally efficiency with 1-20 seconds
  for running the complete pipeline to got a matching of shapes up to $40,000$ vertices.
  Additionally, our approach allows using different local feature
  descriptors for 2D and 3D data.  In particular, we show how spectral
  2D and 3D features can be compared such that we obtain a semantically
  driven matching between 2D and 3D shapes.
  As a byproduct of the correspondence we also get a 2D-to-3D similarity
  criterion allowing efficient sketch-based shape retrieval.

  The rest of this chapter is organized as follows.  In the remaining part
  of this section, we review previous works and summarize our
  contributions.
  In Section~\ref{sec:matching}, we formulate the 2D-to-3D matching
  problem as an energy minimization problem and discuss its
  discretization and optimization.
  Section~\ref{sec:results} shows experimental results.  We consider a
  challenging application, namely deformable sketch-based retrieval.
  Finally, Section~\ref{sec:conclusion} concludes by discussing
  the limitations and potential extensions of the approach.

  \paragraph{Differences to Conference Version.}~
  Additionally to the full content of the conference version \cite{LRSBC16},
  this version contains a more detailed theoretical derivation of the algorithm in
  Section~\ref{sec:matching}.  In addition to the global
  optimal method, we propose an additional
  version to compute an $\epsilon$-close approximate solution that is much faster
  and in practice comparable to the exact computation. The dataset for the
  retrieval experiments was extended to include three more classes and
  more than 30 poses for already existing classes. We also added two handdrawn
  2D queries to the retrieval dataset.  Furthermore, the runtime
  experiments were extended to show the quadratic dependency on the
  number of vertices in the 3D target shape. Last, we show the energies of the
  retrieval experiments form class-wise clusters when embedded into 2D.

  \section{Related Work}\label{sec:relatedwork}

  \subsection{2D and 3D shape correspondence.}\label{subsec:rw:2D3D}
  The classical 2D-to-2D and 3D-to-3D settings of the shape matching
  problem have been thoroughly researched in the computer vision and
  graphics communities (see~\cite{van2011survey} for a survey).
  In the domain of 3D-to-3D shape matching, a major challenge is to have
  theoretical guarantees about the optimality and quality of the
  correspondence.
  Several popular methods try to find a correspondence that minimally
  distorts intrinsic distances between pairs of corresponding points by
  approximate solution to the quadratic assignment problem
  \cite{leordeanu2005spectral,memoli2005theoretical,bronstein2006generalized,rodola12cvpr,rodola13elastic}.
  A recent line of works builds upon the functional
  representation~\cite{ovs12,pokrass13,rodola-cvpr14,kovn15,rodola15},
  where a point-wise map is replaced by a linear map between function
  spaces. While these approaches solve many challenging problems, they
  lack theoretical guarantees on the quality of the final solution.  In
  particular, none of these methods yields provably {\em continuous}
  maps between the given shapes. %
  This seems to be surprising, since the functional maps are linear.
  Nonetheless, linear operators between function spaces do not need to be
  continuous. Hence, finding continuous matchings is still a challenging
  task in the area of 3D-to-3D matching applications.  A notable
  exception is represented by the work of Windheuser
  \etal~\cite{wind11}.
  The authors mathematically define the space of continuous face-wise correspondences between two shapes including degenerations which allow for arbitrary stretching and contractions.
  This formulation in combination with a physically meaningful deformation cost is converted into an Integer Linear Program (ILP) whose result is the least-cost, continuous correspondence.
   Similarly to~\cite{wind11}, our method comes with
  the theoretical guarantee of a continuous solution.  In contrast
  to~\cite{wind11}, our method can compute a matching in under half a
  minute instead of several hours.
  A related challenge is to generate a 3D model from a human produced contour drawing \cite{10.1145/3240508.3240699}.
  The setting differs in that no 3D shape is given as input, but nevertheless it is necessary to
  generate some kind of correspondence of the 2D input to the result to guarantee
  that the input was properly transferred \cite{Lift3D_SA20}.
  In our 2D-to-3D correspondence problem, the 2D shape is modeled as a
  closed planar curve and the 3D shape as a surface in $\IR^3$. To find
  the correspondence, we look for a closed curve on the surface.
  From this perspective, our method can be seen as an extension of an
  image segmentation task that looks for a closed curve within a 2D
  image domain.  It was shown in~\cite{Schoenemann-Cremers-pami10} that
  this segmentation problem can be formulated as finding a shortest path
  in the product graph of the 2D image domain and the 1D curve domain,
  where the size of the graph depends on the Lipschitz constant of the
  mapping. The drawback is that this constant is typically unknown in
  advance.
  Differently from~\cite{Schoenemann-Cremers-pami10}, the size of the
  constructed graph with our method is independent of the Lipschitz
  constant.

  Furthermore, one of the main challenges in our method is to find an
  initial match on the product graph.
  In~\cite{Schoenemann-Cremers-pami10} this problem was solved by
  parallelization. As a result, the overall computation time is not
  reduced but just distributed intelligently among several
  computational cores.
  Instead, we use a branch-and-bound approach that only computes
  shortest paths in those regions that are `most promising'.  This
  strategy reduces the runtime substantially (especially with
  well-chosen shape descriptors), while still converging to a global
  optimum.

  Even in the simpler 2D-to-2D setting, the computation of a
  globally-optimal correspondence can be very slow if we do not know an
  initial match.  For example, the runtime of Dynamic Time Warping
  methods is $\cO(n^2)$ if an initial match is given, and $\cO(n^3)$ if
  every possible initial match is tested independently, where $n$ is the
  number of shape samples.
  It was shown that by exploiting the planarity of the involved graph,
  the runtime of the whole matching including an initial match can
  be reduced to $\cO(n^2\log(n))$ by using shortest circular path or
  graph cut approaches~\cite{Maes-91,Schmidt-et-al-iccv07,Schmidt-et-al-cvpr09}.  A competitive approach is the
  branch-and-bound approach of~\cite{Appleton-2003}. While this does not
  reduce the worst case time complexity of $\cO(n^3)$, it is rather fast
  in practice. Since this method does not use the planarity of the involved
  graph, we can adapt it to our scenario in order to reduce the
  practical runtime substantially.

  \subsection{Sketch-based retrieval.}~\label{subsec:rw:sketchbased}
  One of the important applications of 2D-to-3D matching is
  shape-from-sketch retrieval. This problem has recently drawn the
  attention of the machine learning community as a fertile playground
  for cross-modal feature
  learning~\cite{eitz2012sbsr,furuya14,Li20151,su15mvcnn,hueting2015crosslink}.
  Herzog \etal~\cite{herzog15} recently proposed to learn a shared
  semantic space from multiple annotated databases, on which a metric
  that links semantically similar objects represented in different
  modalities (namely 2D drawings and 3D targets) is learned.
  More recent approaches learn an expressive representation from a collection
  of images to retrieve the corresponding 3D shape \cite{ZHOU2017101,Xu_2019_ICCV}.
  \cite{Nie2021} applies a metric learning approach to embed the pairs of inputs in
  the same space, and use the projection in this space as the retrieval feature.
  A similar approach is taken by \cite{TABIA201724} which is not restricted to images
  but can be generalized to any kind of different modality, including contours.
  The method of \cite{Yang20} specifically tackles the problem of retrieval from
  a collection of silhouettes by learning comparable features from a convolutional neural network
  and making use of the ability to project the object onto the silhouette.

  Although these approaches yield promising results in the {\em rigid} setting and
  can address some variability of the shapes, its applicability to the
  {\em non-rigid} setting is an open question.
  In contrast, our method targets explicitly the setting when both the
  3D target and the 2D query are allowed to deform in a non-rigid
  fashion.
  Furthermore, the method of~\cite{herzog15} as well as other existing
  approaches mostly focus on finding {\em similarity} between a 2D
  sketch and a 3D shape while we solve the more difficult problem of
  finding {\em correspondence} (from which a criterion of similarity is
  obtained as a byproduct).

  \section{2D-to-3D Matching}\label{sec:matching}

  In this section, we formulate, discretize and optimize the shape
  matching problem between a 2D \emph{query shape} and a 3D
  \emph{target shape}. We start with a continuous formulation
  and discretize it in Section~\ref{sec:optimization}.
  By a \emph{shape} we refer to the outer shell of
  an object. The object itself will be referred to as the shape's
  \emph{solid}. The 3D ball $B=\{x\in\IR^3|\norm{x}\leq1\}$ is for
  example the solid of the sphere $\IS^2=\{x\in\IR^3|\norm{x}=1\}$.  We
  summarize this convention in the following definition:

  \begin{definition}\label{def:shape}
    A compact set $S\subset\IR^d$ is called a \emph{shape} of dimension
    $d$ if it is a connected, smooth manifold and if it can be
    represented as the boundary $S=\partial U$ of an open subset
    $U\subset\IR^d$. In this case, we call $U$ the \emph{solid} of $S$.
  \end{definition}

  Note that this definition implies that a 3D shape is a 2-manifold and
  a 2D shape is a 1-manifold and shapes can not have self-intersections.

  This section is organized as follows. In Section~\ref{sec:energy} we
  will cast the 2D-to-3D shape matching problem as an energy
  minimization problem, which we will globally optimize and approximate
  in Section~\ref{sec:optimization}. To this end, we assume that
  descriptive features for both shapes are given and that it is possible
  to measure the dissimilarity between 2D and 3D features. The specific
  choice of such features depends on the application. For the
  application of \emph{shape retrieval} that we discuss in
  Section~\ref{sec:retrieval} we use purely spectral features.

  \subsection{Energy formulation}\label{sec:energy}
  Given the 2D query shape $M\subset\IR^2$ and the 3D target shape
  $N\subset\IR^3$, we search a continuous mapping $\phi\colon M\to N$.
  $\phi$ is a proper \emph{2D-to-3D matching} if it is an immersion, \ie, if the
  differential $d\phi$ is of maximal rank at any point.
  This implies that the mapping itself
  is differentiable and cannot collapse into a single point anywhere, \ie, for
  any $x, y \in M$ such that $\phi(x) = \phi(y) = q$ there has to exist a point
  $z \in M$ in between $x, y$ for which $\phi(z) \neq q$ holds.

  The goal of this approach is
  to find a 2D-to-3D matching that sets points that look alike into
  correspondence.  To this end let $f_M\colon M\to\IR^{k_M}$ and
  $f_N\colon N\to\IR^{k_N}$ be two different feature maps. We want to
  stress that the dimensions $k_M$ and $k_N$ do not need to agree. In
  order to measure the dissimilarity between the 2D feature $f_M(x)$ of
  $x\in M$ and the 3D feature $f_N(y)$ of $y\in N$, we assume a positive
  distance function $\dist\colon\IR^{k_M}\times\IR^{k_N}\to\IR_0^+$ to
  be given. This distance takes care of the difficult task of comparing
  2D features with 3D features but depends of course on the chosen
  features. A concrete choice for the features and distance functions
  is presented in Section~\ref{para:features}. Given the two feature maps $f_M$
  and $f_N$ as well as the distance
  function $\dist$, we call a 2D-to-3D matching $\phi$ optimal if it
  minimizes the energy
  \begin{align}\label{eq:E_line}
    E(\phi) \colon= \int_{\Gamma_\phi}\dist(f_M(s_1),f_N(s_2))\ \ds,
  \end{align}
  where $\Gamma_\phi=\{(s_1,s_2)\in M\times N|s_2=\phi(s_1)\}$ denotes
  the graph of $\phi$, the submanifold of the product manifold $M \times N$ which includes all the pairs that are set in correspondence by $\phi$.
  The energy $E$ accumulates the distance between the features
  of any pair of matched points $(s_1, s_2)$ in $\phi$ which means a good solution will place points of the query onto points on the 3D shapes with a similar features.
  Since $\phi$ is assumed to be continuous, this is not a simple nearest neighbor problem but takes the geometry of $M$ into account.
  Note that $\Gamma_\phi$ is a simplicial complex
  due to the immersion property of $\phi$. $E$ is therefore defined as a
  line integral.
  Calculating the area elements needed for the line integral on $\Gamma_\phi$ is not straight-forward.
  Instead we substitute $\phi$ with a higher-dimensional mapping $\hat\phi: M \to M \times N$, $x \mapsto (x, \phi(x))$. $\hat\phi$ conveys the same information as $\phi$ but, as we will see below, integrates on $M$ with known area elements.
  The definition of $\hat\phi$ leads to the following equations $s = (s_1, s_2) = (x, \phi(x))$, $\hat\phi(M) = \Gamma_\phi$ and therefore $f_N(s_2) = f_N \circ \phi(x)$. We apply these in the substitution rule with $\hat\phi$ which results in the following energy function:

  \begin{align}
    E(\hat\phi) &= \int_M \dist(f_M(x),f_N\circ\phi(x)) \cdot  \Vert \hat\phi'(x) \Vert\ \dx
    \label{eq:substituded}
  \end{align}

  Since $M$ is a one-dimensional manifold, the norm can be calculated as $\Vert \hat\phi'(x)\Vert = \sqrt{d\hat\phi^\top d\hat\phi}$. Fortunately, $d\hat\phi$ only depends on $d\phi(x)$ with an additional constant entry from where $x$ was mapped to itself:
  \begin{align}
    d\hat\phi(x): &\ T_xM \to T_xM \times T_yN \\
    &\ v \mapsto \begin{pmatrix} v \\ d\phi(x)v \end{pmatrix} = \begin{pmatrix} 1 \\ d\phi(x) \end{pmatrix} v
  \end{align}
  Including this in Eq.~\eqref{eq:substituded} leads to the following energy function which depends only on $\phi$ again:
  \begin{align}\label{eq:E_coord}
    E(\phi) &= \int_M \dist(f_M(x),f_N\circ\phi(x)) \sqrt{1+d\phi_x^{\top}d\phi_x}\ \dx
  \end{align}

  Hence, the energy $E$ can be broken down into the \emph{data~term}
  $\dist(f_M(\cdot),f_N\circ\phi(\cdot))$ comparing feature values and the
  \emph{regularizer} $\sqrt{1+d\phi^{\top}d\phi}$ penalizing stretching.

  \begin{description}
    \item{\textbf{Regularization.}}
    %\paragraph*{Regularization.}
        If we ignore the data term, the global minimum of $E$ would result
      in a constant $\phi$. This $\phi$ is continuous, but matches every
      point on $M$ to the same point on $N$.  It therefore ignores the
      similarity information stored in the data term.

    \item{\textbf{Data term.}}
    %\paragraph*{Data term.}
    If we ignore the regularizer, the global minimum of $E$ can be
    computed by selecting for each $x\in M$ a $y\in N$ that minimizes the
    given feature distance $\dist(f_M(x),f_N(y))$. In this case, the minimizer
    of $E$ will match similar points but $\phi$ might be neither injective
    nor continuous. Combining the data term with the
    regularization results in a smooth matching function $\phi$ that also
    takes similarity into account.

    \begin{figure*}[t]
    \centering
      \includegraphics[width=.3\linewidth]{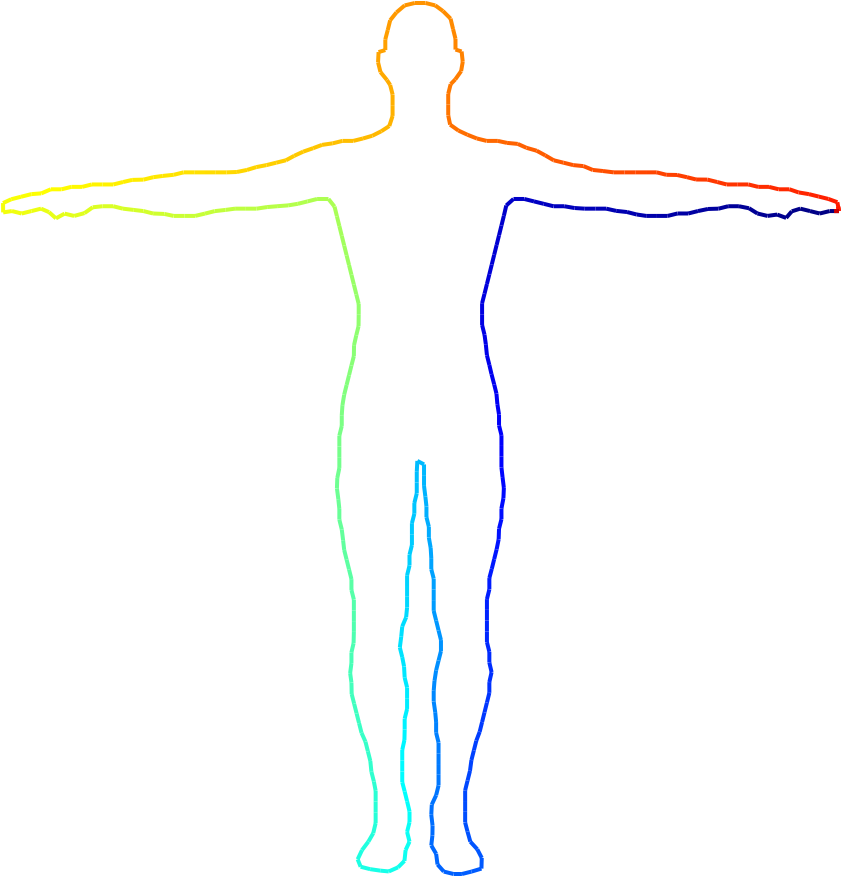}
      \includegraphics[width=.3\linewidth]{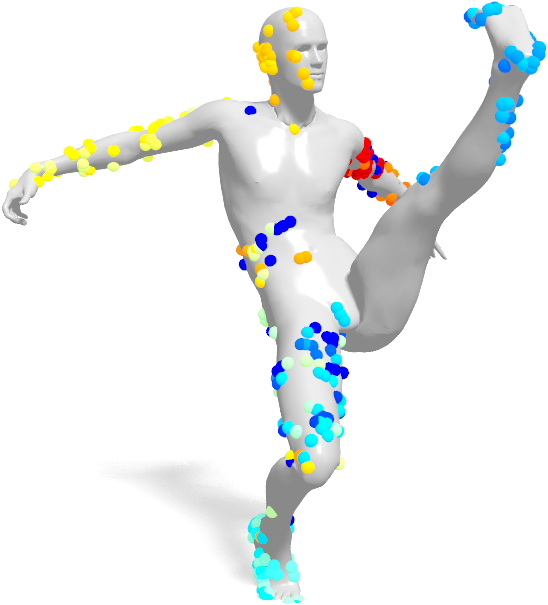}
      \includegraphics[width=.3\linewidth]{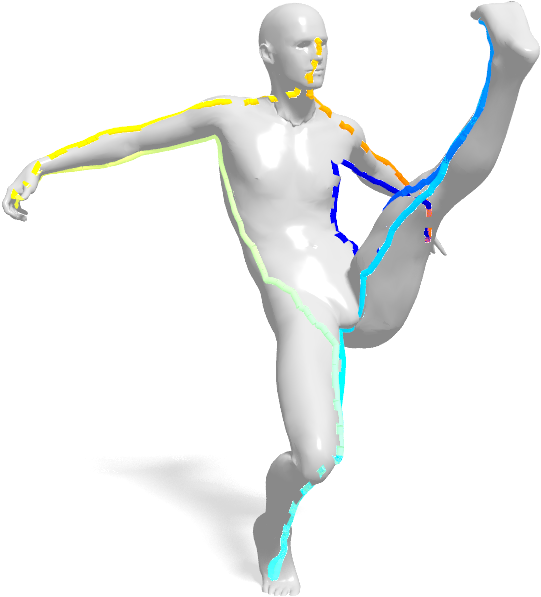}
      \caption{Matching between a 2D query shape (left) and a 3D target shape achieved
      by solving a LAP between the same point-wise features our method uses (middle)
      and our method (right).}
      \label{fig:lap}
    \end{figure*}

  \end{description}

  Alternatively to the energy described here, one could also choose to
  enforce injectivity of $\phi\colon M\to N$. This would lead to a
  linear assignment problem (LAP), which normally results in
  non-continuous matchings and is rather slow. If the shapes $M$ and $N$
  are discretized at $m$ and $n$ points, respectively, the overall
  runtime of the Hungarian method~\cite{munkres} to solve this problem
  is $\cO(n^3)$. The method that we propose does not only provide for a
  smooth and continuous solution, but also has a better worst
  case runtime complexity than the LAP (cf.  Theorem~\ref{thm:runtime}). %
  Exploring the runtime of the LAP approach for one matching instance
  resulted in a runtime of 11 hours instead of just a few seconds for
  our method. The LAP matching result using the same features as in our
  experiments can be seen in Figure~\ref{fig:lap}. %

    \subsection{Optimization}\label{sec:optimization}

    So far we defined the energy we want to minimize. In the following,
    we address the discretization of this minimization problem and
    show that a globally optimal solution and an approximate solution
    within an arbitrary margin of the optimum can be computed
    efficiently by solving a \emph{shortest path problem} on the discrete product
    manifold of both shapes.
    In practice, we often observed that many solutions with an energy close
    to the optimum are not qualitatively worse than the optimum. At the same time
    the algorithm finds solution close to the optimum very fast and spends a
    proportionally long time on finding the exact optimum. For this reason we
    implemented an approximate variation of our algorithm that uses an upper and
    lower bound to identify how close to the optimum the current solution is and
    stops if they are reasonably close.

    To this end we assume that the 2D query shape $M$ and the 3D target
    shape $N$ are discretized. Thus, $M$ is given as a simple, directed circular
    graph, \ie, $\cG_M=(\cV_M,\cE_M)$ with
    \begin{align*}
      \cV_M=&\{x_0,\ldots,x_{m-1}\}\subset\IR^2\\
      \cE_M=&\{(x_i,x_j)\in \cV_M^2 ~|~ %
      j\equiv i+1\mod{m}\}.
    \end{align*}

    The target shape $N$ on the other hand is given as a 3D triangular mesh
    $\cG_N=(\cV_N,\cE_N,\cF_N)$, where the set of vertices is
    denoted by $\cV_N=\{y_0,\ldots,y_{n-1}\}$
    , $\cE_N$ is the set of unoriented edges,
    and $\cF_N$ the set of faces.  Further, we assume that the feature
    maps $f_M\colon M\to\IR^{k_M}$ and $f_N\colon N\to\IR^{k_N}$
    with a corresponding distance function are
    given as information on the vertices.  Thus, we can represent
    the feature distance between all possible matches as a matrix $D\in\IR^{m\times n}$ with
    $D_{ij}=\dist(f_M(x_i), f_N(y_j))$ which can be precomputed.

    Given this discretization, we define the product graph $\cG_{M\times
      N}=(\cV_{M\times N},\cE_{M\times N},\cC_{M\times N})$ via
    \begin{align*}
      \cV_{M\times N}=&\{0,\ldots,m-1\}\times\{0,\ldots,n-1\}\\
      \cE_{M\times N}=& %
      \left\{[({i_0},{j_0}),({i_1},{j_1})]\in\cV_{M\times N}^2\right|\\
      &\quad({i_1}={i_0})\land(y_{j_1},y_{j_0})\in\cE_N \text{ or }\\
      &\quad({i_1}={i_0}+1)\land({j_1}={j_0}) \text{ or }\\
      &\quad({i_1}={i_0}+1)\land(y_{j_1},y_{j_0})\in\cE_N \}
    \end{align*}

    \begin{figure}[t]
    \centering
      \begin{overpic}
        [trim=0cm 0cm 0cm -2cm,clip,width=\linewidth]{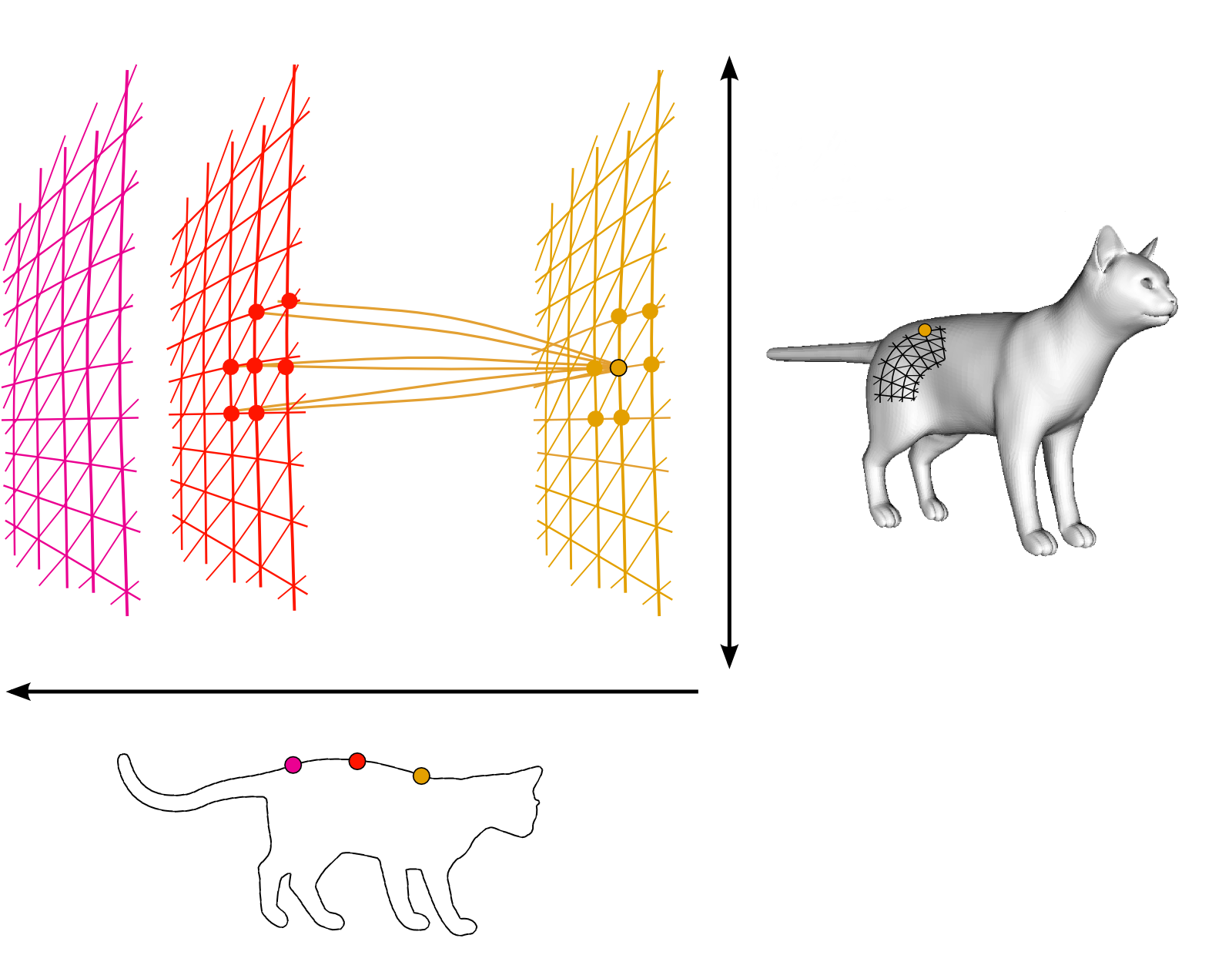}
        \put(25,75){$\cG_{M\times N}$}
        \put(47,25){$L_i$}
        \put(17,25){$L_{i+1}$}
        \put(3,25){$L_{i+2}$}
        \put(48,45){$(i,j)$}
        \put(24,45){$(i+1,k)$}
        \put(34,16){$i$}
        \put(10,9){$\cG_M$}
        \put(75,53){$j$}
        \put(70,30){$\cG_N$}
      \end{overpic}
      \caption{\label{fig:prodman} A node $(i,j)$ in the product graph
        $\cG_{M\times N}$ represents a match between the vertex
        $i\in\cV_M$ of the contour $M$ and the vertex $j\in\cV_N$ of the
        surface $N$. All feasible matches with respect to vertex $i$ form
        the layer $L_i=\{i\}\times \cV_N$. Edges are defined between a
        node $(i,j)$ and $(i+1,j)$ as well as $(i,k)$ and $(i+1,k)$ for
        all surface vertices $k\in\cV_N$ that are adjacent to $j$. All
        these edges are directed and enforce a continuous matching. }
    \end{figure}

    The product graph takes the
    Cartesian product of the vertices in $\cG_M$ and $\cG_N$
    representing all possible point-wise matchings between the two
    shapes and connects elements of this set iff their projections on
    the original shapes are connected (or identical). See Fig.~\ref{fig:prodman}
    for an illustration. This preserves the original connectivity of each shape
    and enforces continuous solutions. Notice that the orientation of the circular 2D
    query shape is maintained because edges with $i_0 \neq i_1$ are
    directed. Edges where $i_0 = i_1$ are connected in both directions.

    Furthermore, we will be interested in \emph{edge-wise}
    instead of point-wise costs to solve the shortest path problem.
    We use the distances as stored in $D$ at each endpoint of
    one edge $\left[ (i_0, j_0), (i_1, j_1)\right] \in \cE_{M\times N}$

    \begin{align*}
      D_{i_0,j_0}=&
      \ \dist\left(f_M(x_{i_0}),f_N(y_{j_0})\right)\\
      D_{i_1,j_1}=&
      \ \dist\left(f_M(x_{i_1}),f_N(y_{j_1})\right)
    \end{align*}

    and linearly interpolate the costs along the edge. This is equal to
    integrating over the average of both values and results in the cost function
    \begin{align*}
      \cC_{M\times N}[({i_0}&,{j_0}),({i_1},{j_1})]=\\
      &\frac{D_{i_0,j_0}+D_{i_1,j_1}}
      2\cdot\norm{(x_{i_0},y_{j_0})-(x_{i_1},y_{j_1})}.%
    \end{align*}
     $(x,y)$ is a 5D-coordinate with the stacked coordinate
     values from $x \in M$ (2D) and $y \in N$ (3D). This is a discretization of the line integral between both vertices from
     Equation \ref{eq:E_line}.

      To solve the shortest path problem a fixed source and target set
      is needed. We have no information about which vertices are contained in
      the solution but to have a circular path, we know that each $x \in M$ has
      to represented at least once in the solution. Therefore,
      the representation of the 2D shape $M$ is cut at an arbitrary $x$  and is extended by
      having two copies of $x_0$, namely at position $i=0$ and at
      position $i=m$. As a result, any continuous matching can be re\-presented by
      a path from $(0,j)$ to $(m,j)$. Hence, an optimal matching can be
      cast as finding a shortest path in a graph if an initial match
      $(x_0,y_j)\in\Gamma_\phi$ is given. Such a computation can easily
      be done by Dijkstra's algorithm \cite{dijkstra59}. Using a priority heap
      the computation takes
      $\cO(mn\cdot\log(mn))$ steps. Since there is no path
      from $(i_1,j_1)$ to $(i_0,j_0)$ if $i_1>i_0$, we associate to each
      \emph{layer} $\{i\}\times\{0,\ldots,n-1\}$ a different priority
      heap and reduce the runtime to $\cO(mn\log(n))$. These
      observations lead to the following theorem considering the above mentioned
      observations and the fact that we have to test $n$ different
      initial matches.

      \begin{theorem}\label{thm:runtime}
        Given a 2D query shape $M$ and a 3D target shape $N$,
        discretized by $m$ and $n$ vertices, respectively, we can find a
        minimizer of~\eqref{eq:E_line} in $\cO(mn^2\log(n))$
        steps. If $n=\cO(m^2)$, this leads to the subcubic
        runtime of $\cO(n^{2.5}\log(n))$.
      \end{theorem}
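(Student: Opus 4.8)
The plan is to reduce the minimization of \eqref{eq:E_line} to a family of single-source shortest-path computations on the product graph $\cG_{M\times N}$, exploiting its layered structure. The discussion preceding the theorem already supplies the two ingredients I need: every admissible discrete matching corresponds to a directed path from $(0,j)$ to $(m,j)$ for some surface vertex $j$, and for a \emph{fixed} initial match $(x_0,y_j)$ a minimum-cost such path is computable by Dijkstra's algorithm in $\cO(mn\log n)$ time. The only thing the theorem adds is to remove the assumption that the initial match is known in advance.

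First I would argue that the set of candidate initial matches is exactly $\cV_N$, i.e. the $n$ surface vertices. Because $M$ is a closed curve, the two copies $x_0$ and $x_m$ must be sent to the same surface vertex, so every closed matching is a path $(0,j)\to(m,j)$ for a single $j\in\{0,\dots,n-1\}$, and no admissible path has any other form. Hence the global discrete optimum equals $\min_j d\bigl((0,j),(m,j)\bigr)$, where $d$ is shortest-path distance with respect to the costs $\cC_{M\times N}$. For each $j$, one layered Dijkstra run from the source $(0,j)$ returns, among its outputs, the needed distance to $(m,j)$, so $n$ runs suffice and the cheapest over the $n$ sources is the minimizer. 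The total cost is $n\cdot\cO(mn\log n)=\cO(mn^2\log n)$, the claimed bound; a branch-and-bound ordering of the sources can only prune runs and so cannot exceed this worst-case figure.

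I would then make the two borrowed facts precise. For correctness it must be checked that the path cost faithfully encodes the discretized energy: by construction $\cC_{M\times N}$ is the trapezoidal average of the two endpoint data terms weighted by the Euclidean length of the segment in $M\times N$, so the accumulated cost of a path is exactly the discretization of $\int_{\Gamma_\phi}\dist\,\ds$, while continuity is automatic since edges join only surface-adjacent (or identical) vertices. For the $\cO(mn\log n)$ per-source bound I would note that the mesh has $\cO(n)$ edges, so $\cG_{M\times N}$ has $\cO(mn)$ vertices and $\cO(mn)$ edges, and all costs are nonnegative (products of nonnegative $D$-entries and Euclidean lengths), so Dijkstra applies. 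Since no edge decreases the layer index, the layers are processed in increasing order with a separate heap per layer of size at most $n$, making each heap operation cost $\cO(\log n)$ rather than $\cO(\log(mn))$.

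\textbf{Main obstacle.} The subtle point is precisely this per-layer accounting: the intra-layer edges $(i,j)\to(i,k)$ (present in both orientations for surface-adjacent $j,k$) create directed cycles \emph{within} a layer, so the layer index is not a genuine topological order and one cannot simply relax each vertex once. The careful fix is a bounded multi-source Dijkstra restricted to each layer's $n$-vertex subgraph, seeded by the forward edges entering from the previous layer; summing $\cO(n\log n)$ over the $m+1$ layers recovers $\cO(mn\log n)$. Finally, the subcubic corollary is a direct substitution: $mn^2\log n=\cO(n^{2.5}\log n)$ holds precisely when $m=\cO(\sqrt{n})$, equivalently $m^2=\cO(n)$ — the natural regime in which the surface resolution dominates the sketch resolution — and there the runtime beats the $\cO(n^3)$ of the Hungarian-method alternative.
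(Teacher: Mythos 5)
Your proof is correct and follows essentially the same route as the paper's: enumerate the $n$ candidate initial matches, run one layered Dijkstra per candidate at $\cO(mn\log n)$ each (separate heap per layer, justified because no edge decreases the layer index), and verify that the trapezoidal edge costs $\cC_{M\times N}$ discretize the line integral in~\eqref{eq:E_line}. You go slightly beyond the paper in two welcome ways: you explicitly handle the intra-layer cycles that prevent a naive one-pass relaxation in layer order (the paper's proof glosses over this), and you state the hypothesis of the subcubic corollary as $m^2=\cO(n)$ rather than the paper's $n=\cO(m^2)$ --- yours is the direction that actually yields $mn^2\log n=\cO(n^{2.5}\log n)$.
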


      This theorem shows that we can find a globally optimal matching in
      polynomial time. Nonetheless, this may still lead to a high
      runtime  since we have to find for each vertex
      $y\in\cV_N$ a shortest path in $\cG_{M\times N}$. In order to
      circumvent this problem we follow a branch-and-bound strategy inspired
      by the method of~\cite{Appleton-2003}.

      \begin{figure*}
        \quad\includegraphics[trim={8cm .3cm 8cm 0cm},clip,width=.23\linewidth]{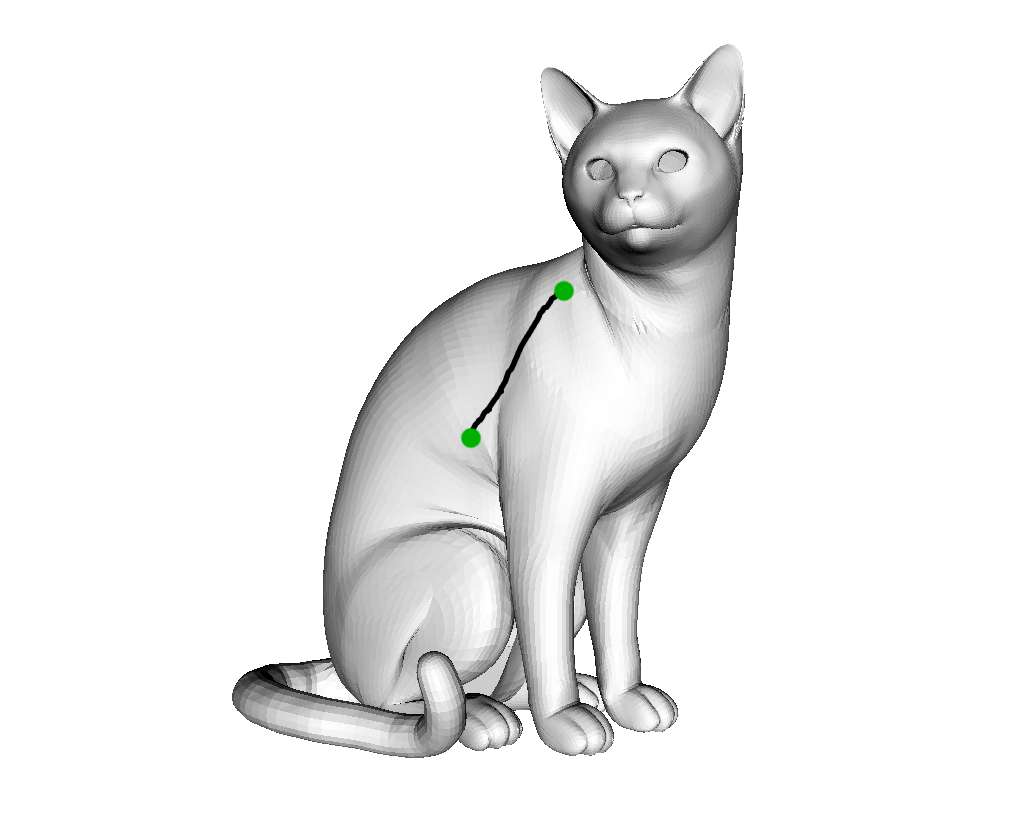}
        \includegraphics[trim={8cm .3cm 8cm 0cm},clip,width=.23\linewidth]{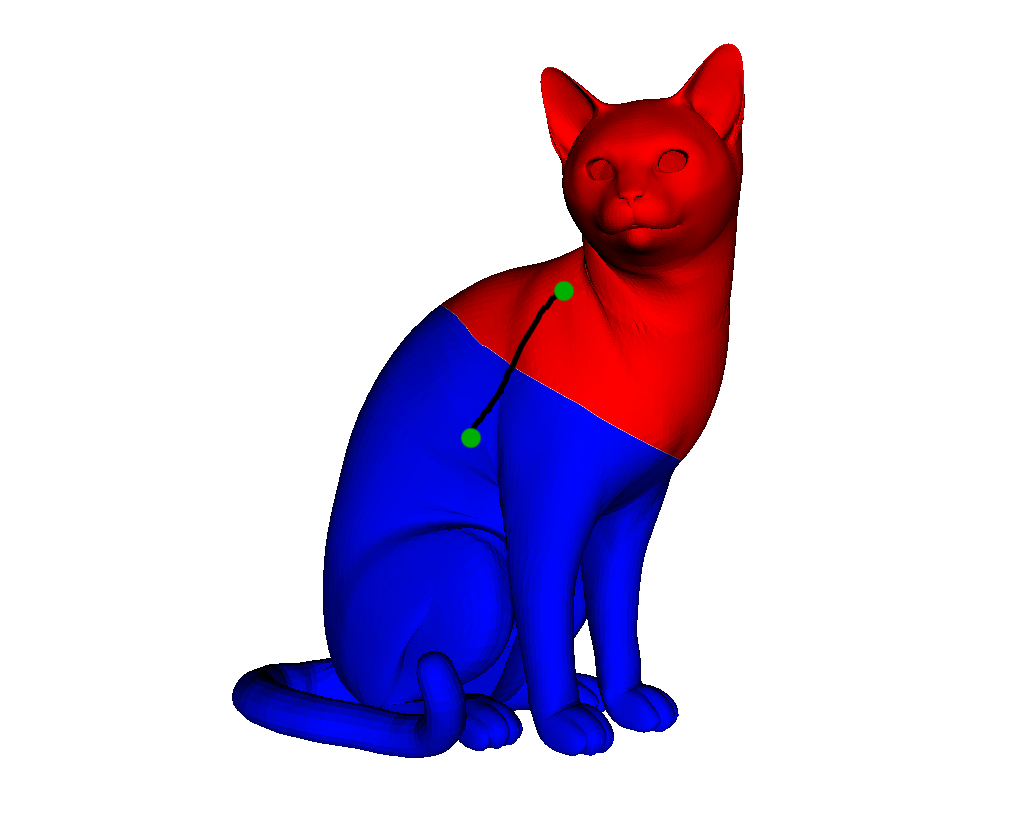}
        \includegraphics[trim={8cm .3cm 8cm 0cm},clip,width=.23\linewidth]{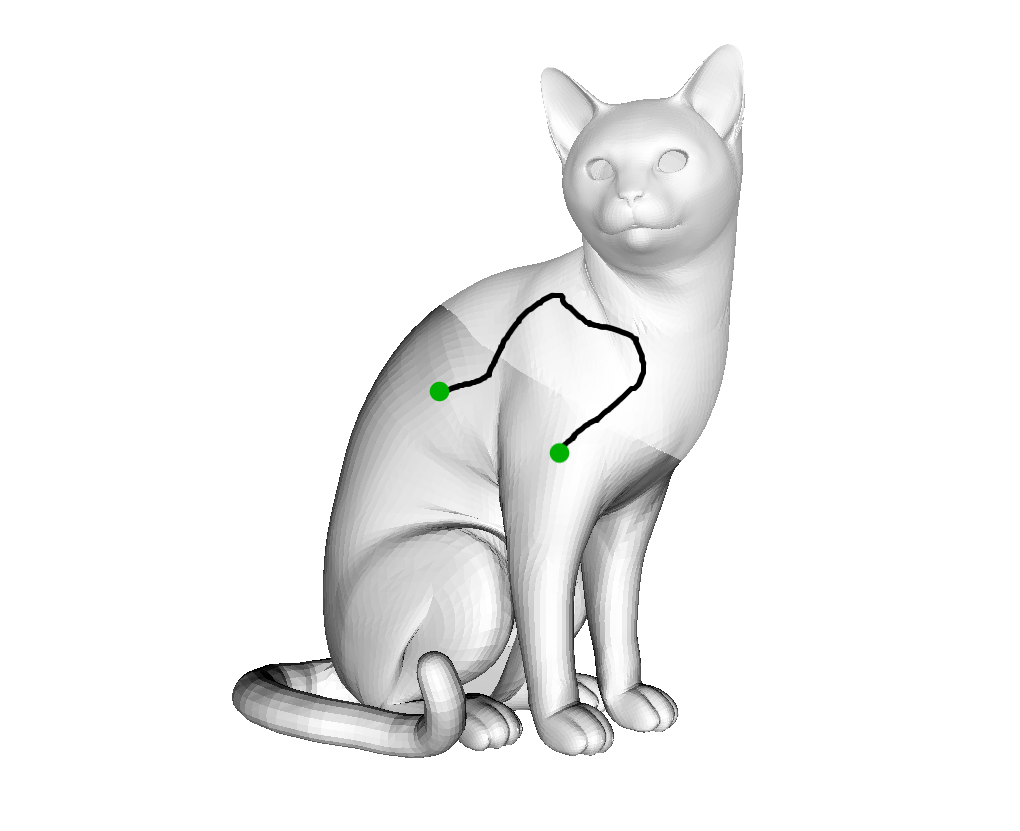}
        \includegraphics[trim={8cm .3cm 8cm 0cm},clip,width=.23\linewidth]{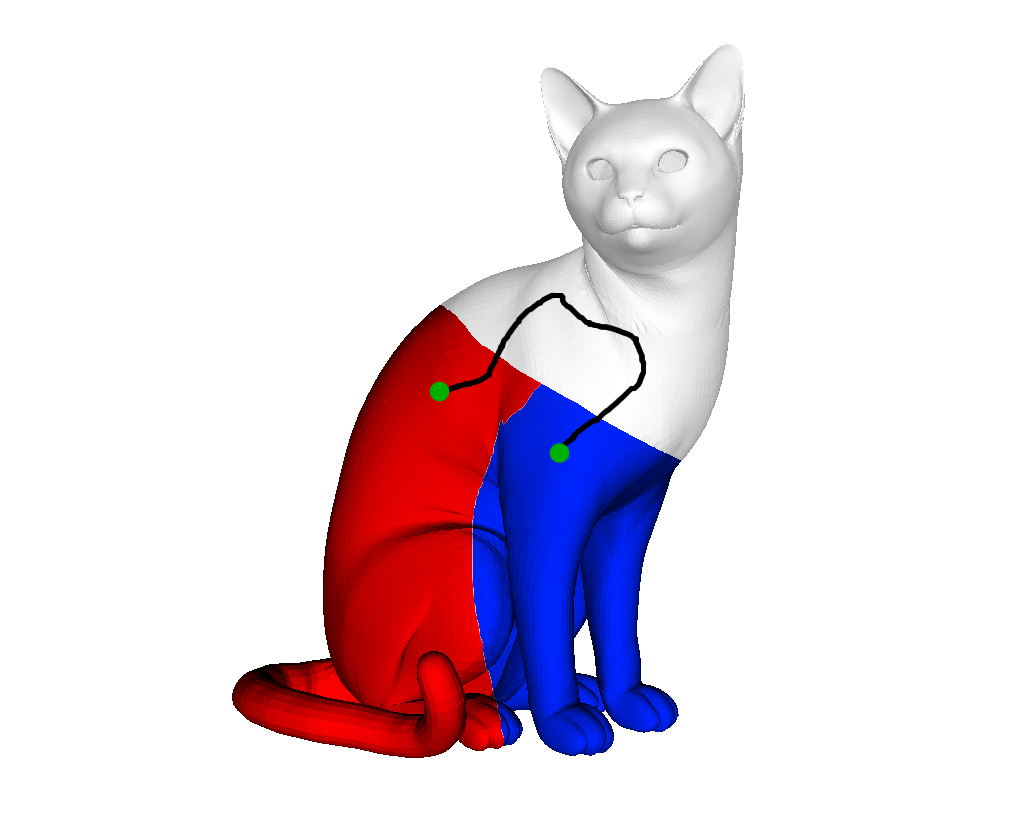}
        \caption{Exemplary first two iterations of the branch-and-bound algorithm. Paths are projected
        from the product manifold on the 3D shape. (Left) Absolute energy minimal path $p_1$ through the cut product manifold.
        (Middle Left) 3D Shape is separated in two subareas. One containing the source and the other the sink point of $p_1$.
        (Middle Right) In the next iteration one area is chosen; blue in this example but red will be processed later in the same way.
        The source and sink of the optimal path $p_2$ have to lie in the blue area. The
        rest of the path can go anywhere. This excludes the previous minimum $p_1$. (Right) If the source and sink of $p_2$
        do not conincide, the blue area is again split in two. }
        \label{fig:bandb}
      \end{figure*}

      \begin{algorithm}
      \begin{algorithmic}[h]
        \Require{$\cG_{M\times N}=(\cV_{M\times N},\cE_{M\times
            N},\cC_{M\times N})$, \textcolor{blue}{$\epsilon$}}%
        \Ensure{Matching path $\Gamma_\phi$}%
        \State{Let $R:=\{0,\ldots,n-1\}$ ;}
        \State{Define $\cR=\{R\}$ and $b\colon\cR\to\IR$ via $b(R)=0$ ;}
        \State{Define \verb+isFound+=\verb+false+ ;}
        \While{\texttt{isFound}=\texttt{false}}
        \State{%
          \State{\texttt{lowerbound}=$\min b$ ;}
          \State{Let $R\in\arg\min b$; $\cR=\cR\setminus\{R\}$ ;}
          \State{Find shortest path $\Gamma$ in $\cG_{M\times N}$\\
          \hspace{2em}from $\{0\}\times R$ to $\{m\}\times R$ ;}
          \State{$\Gamma$ is a path from $(0,i)$ to $(m,j)$ ;}
          \If{$i=j$}\State{%
            \texttt{isFound}=\texttt{true};%
            $\Gamma_\phi=\Gamma$%
          }\Else{%
            \State{\textcolor{blue}{Find indices $l, k \in R$ with minimal Euclidean
            distance such that the shortest path} }
            \State{\textcolor{blue}{in $\cG_{M\times N}$ to $(m,k)$
            originates in $(0,l)$ ;}}
            \State{\textcolor{blue}{\texttt{upperbound} = $dist( (0,k), (m,l) )
              + dist( (m,l), (m,k) )$ in $\cG_{M\times
                N}$\; } \textcolor{blue}{\If{\texttt{lowerbound} $\geq
                (1 - \epsilon) \cdot$ \texttt{upperbound}}\State{
                $\Gamma_\phi= dist( (0,l), (m,l) )$ ; \texttt{break} ;
              }\EndIf} }
            \State{Divide $R$ into $R=R_1\cup R_2$ such that}
            \State{\hspace{2em}$x\in R_1$: }
            \State{$\Leftrightarrow\dist_N(x,i)<\dist_N(x,j)$ ;}
            \State{Set $\cR:=\cR\cup\{R_1,R_2\}$ ;}
            \State{Set $b(R_1)=b(R_2)=\length(\Gamma)$ ;}
            \If{$\length(\Gamma) \leq \min b$}\State{
              \texttt{isFound}=\texttt{true} ;%
            }\EndIf
          } \EndIf
        } \EndWhile

      \end{algorithmic}

      \caption{%
          2D-to-3D matching via branch-and-bound. %
          The blue code denotes the addition needed for the approximation algorithm.
          \label{algo:BB}
        }

      \end{algorithm}
      The main idea is to follow a coarse-to-fine strategy. First we
      compute the shortest path between the sets $\{0\}\times R$ and $\{m\}\times
      R$, which connects $(0,i)$ with $(m,j)$. In the first iteration,
      we set $R = \cV_N$.  In this case $i$ and $j$ do not need to
      coincide but the energy of this path provides a lower bound on the
      optimal closed solution.  If this path connects the corresponding
      points, \ie, $i=j$, we found a valid path. Otherwise, we separate
      the region $R$ into two sub-regions, $R_1$ containing $i$ and
      $R_2$ containing $j$, and recompute shortest paths starting
      and ending in $\{0\}\times R_i$ and $\{m\}\times R_i$ respectively. Since the shortest path of corresponding
      sub-regions will exclude the previously computed path, the
      optimal path in both $R_1$ and $R_2$ will have a larger energy
      than from the previously computed path. Thus, a natural order in
      which the subregions should be processed is induced.  We propose
      to separate $R$ with respect to the geodesic distance $\dist_N$ on
      the target shape $N$. We continue this process until we find our first matching path
      with $i = j$. Then, we
      still have to process those subdomains whose lower bound is
      smaller than the computed matching path. Afterwards, we are sure
      we found the globally optimal matching path $\Gamma_\phi$. See Figure~\ref{fig:bandb}
      for an illustration of one iteration.

      \paragraph{Approximation.\ }

      We observe that in practice already a few iterations in
      Algorithm~\ref{algo:BB} suffice if the features are
      distinctive and give a clear optimum. But in some cases, especially
      interclass matchings where the global optimum is not natural,
      the branch-and-bound is converging very slowy because many
      equally bad solutions exist. To circumvent
      this situation we implemented an extension using both an upper and lower
      bound of the optimal solution. If both are
      within a certain range of each other, the branch-and-bound stops. This
      guarantees a solution that is provably close to the optimum.

      The lower bound comes, as in the previous section, from paths with $i \neq j$.
      The upper bound is obtained by choosing the open path in the
      current region whose endpoints $(0, i)$ and $(m, j)$ have the smallest Euclidean distance
      between $i$ and $j$.
      We then extend the path on the product manifold via Dijkstra to be a closed path. Because
      each closed path is a valid solution the energy of this path is an upper
      bound for the global optimum. Notice that there are no guarantees on
      the tightness of this upper bound. If the upper bound is within some chosen
      $\epsilon$ of the lower bound from the previous region (see global optimization)
      we know we are close to the global optimum and stop.
      The $\epsilon$ can indicate the absolute or relative error but we choose it as
      the relative one to be insensitive to the scale of the energies.

      To keep the solution as close to the optimum as possible, we recompute
      the path between the $(0, i)$ and $(m, i)$ that closed the distance between the bounds.
      As a result from approximating the closing of the path from $(m, j)$, a different path between $(0, i)$
      and $(m, i)$ might have a lower energy. We keep the starting point as the initial match and recompute the shortest path in the
      product manifold. Because we can compute the optimum with respect to a given
      starting point the energy of this path will be lower or equal to the energy
      of the upper bound.

  \section{Applications}
  \label{sec:results}

  In this section, we apply the proposed method to the problem of
  sketch-based deformable shape retrieval.
  We emphasize that our method is parameter-free, and the only choice is
  with respect to the 2D and 3D features $f_M(\cdot), f_N(\cdot)$ as
  well as the distance function $\dist(\cdot,\cdot)$ between them.
  These choices depend on the specific application and help to
  illustrate the flexibility of our general 2D-to-3D shape matching
  approach.

  \subsection{Sketch-based shape retrieval}\label{sec:retrieval}

  We consider a particular shape retrieval setting in which the
  dataset is assumed to be a collection of 3D shapes, and the query is
  a 2D silhouette (possibly drawn by a human) represented by a closed
  planar curve. Differently from previous
  techniques~\cite{eitz2012sbsr,furuya14,Li20151,su15mvcnn}, our
  method does not use learning to compute features and most
  importantly, we allow the shapes to deform in a non-rigid fashion.
  The 2D-to-3D shape similarity is obtained by considering the
  minimal value of the energy $E(\phi^*)$ obtained by our optimization
  problem.

  \paragraph{Datasets.}

  Due to the novelty of the application, to date there is no benchmark
  available for evaluating deformable 2D-to-3D shape retrieval methods.
  We therefore construct such a benchmark using the FAUST \cite{bogo14},
  TOSCA \cite{bronstein08} and a subset of Non-rigid world \cite{bronstein2006nonrigid3d}
  (additional poses for TOSCA, gorilla, lioness and
  seahorse) datasets. The FAUST dataset consists of
  100 human shapes, subdivided into 10 classes (different individuals),
  each in 10 different poses. The latter two consist of over 100 shapes,
  subdivided into 12 classes (humans and animals in different poses).
  Shape sizes are fixed to around 7K (FAUST) and 10K (TOSCA) vertices.

  In FAUST each class comes with a `null' shape in a
  ``neutral pose'', where no deformation has been applied,
  which we use to define the 2D queries. To this end we cut each null
  shape across a plane of symmetry and project the resulting boundary
  onto a plane. This gives rise to 2D queries of 200-400 points on
  average.
  Note that by doing so we retain the ground truth point-to-point
  mapping between the resulting 2D silhouette and the originating 3D
  target. This allows us to define a quantitative measure on the quality
  of the 2D-to-3D matching between objects of the same
  class. In the extended dataset, silhouettes of one shape showing all
  important extremities of the class are produced and the 2D query is extracted
  from the binary image. Hence, no point-to-point but only class ground truths
  are available. The same method can be applied to hand-drawn silhouetts\footnote{The dataset containing shapes and matchings is publicly
    available at https://zorah.github.io/publication/2016-cvpr-efficient-globally-optimal-2d-to-3d-deformable-shape-matching}.

  As an addition to the queries produced through the actual 3D data, we drew a
  human by hand and used this as an additional query for the retrieval to show
  that the method also works on queries not produced using the targets.
  See Figure~\ref{fig:handdrawn} for the sketch and query.

  \begin{figure}[t]
  \centering
      \includegraphics[trim=0cm 0cm 0cm 0cm,clip,width=0.28\linewidth]{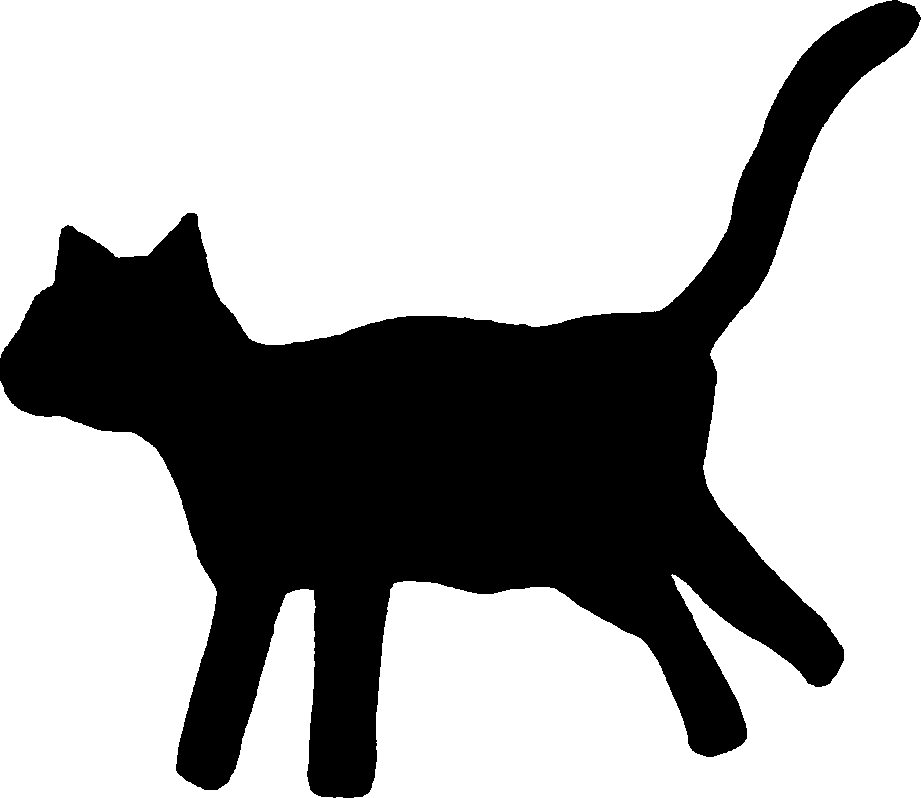}\quad
      \includegraphics[trim=0cm 0cm 0cm 0cm,clip,width=0.28\linewidth]{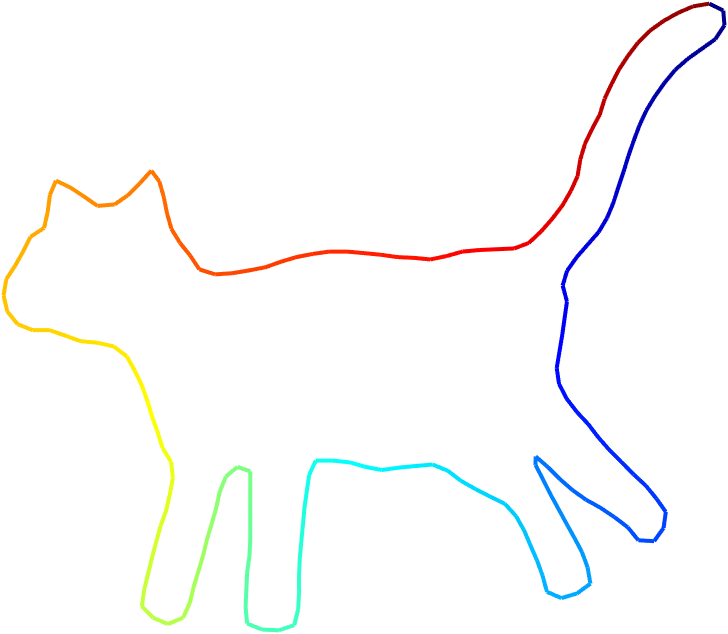}
      \includegraphics[trim=0cm 0cm 0cm 0cm,clip,width=0.4\linewidth]{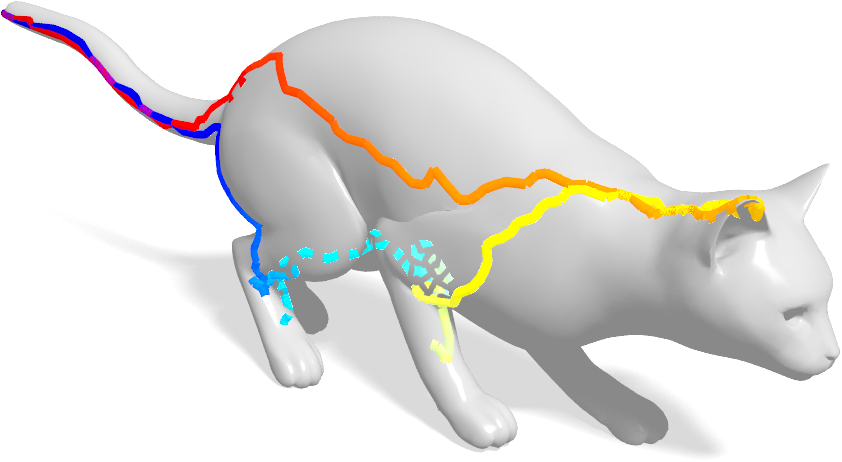}
    \caption{\label{fig:handdrawn} Handdrawn query for a cat shape and the extracted contour
    from the image. The image was simply thresholded to get a binary segmentation and then the
    contour extracted. Personal experience showed that drawing a solid silhouette is easier
    than directly drawing the contour. This also guarantees a shape as defined in Def.~\ref{def:shape}
    without self-intersection or other degenerations. }
  \end{figure}

  \paragraph{Error measure.}

    Let $M$ be a 2D shape represented as a planar curve and $N$ a 3D
    shape represented as a surface.  Let $\phi\colon M
    \to N$ be a matching between a 2D query shape and a 3D target shape,
    and let $\phi_0$ be the ground-truth matching. The matching error of
    $\phi$ at point $x\in M$ is given by
    \begin{align}\label{eq:err}
      \varepsilon_{\phi}(x) = \frac{\dist_N(\phi(x),\phi_0(x))}{\diam(N)},
    \end{align}
    where $\dist_N:N\times N\to\IR ^+_0$ denotes the geodesic distance on
    $N$ and $\diam(N)=\max_{x,y\in N}\dist_N(x,y)$ the geodesic diameter
    of $N$.  Note that due to the normalization, the values of the error
    $\varepsilon$ are within $[0,1]$.%

  % ------------------------------------------------------------------------------------------

  \begin{figure}[t]
  \centering
    \begin{overpic}
      [trim=0cm 0cm 0cm 0cm,clip,width=\linewidth]{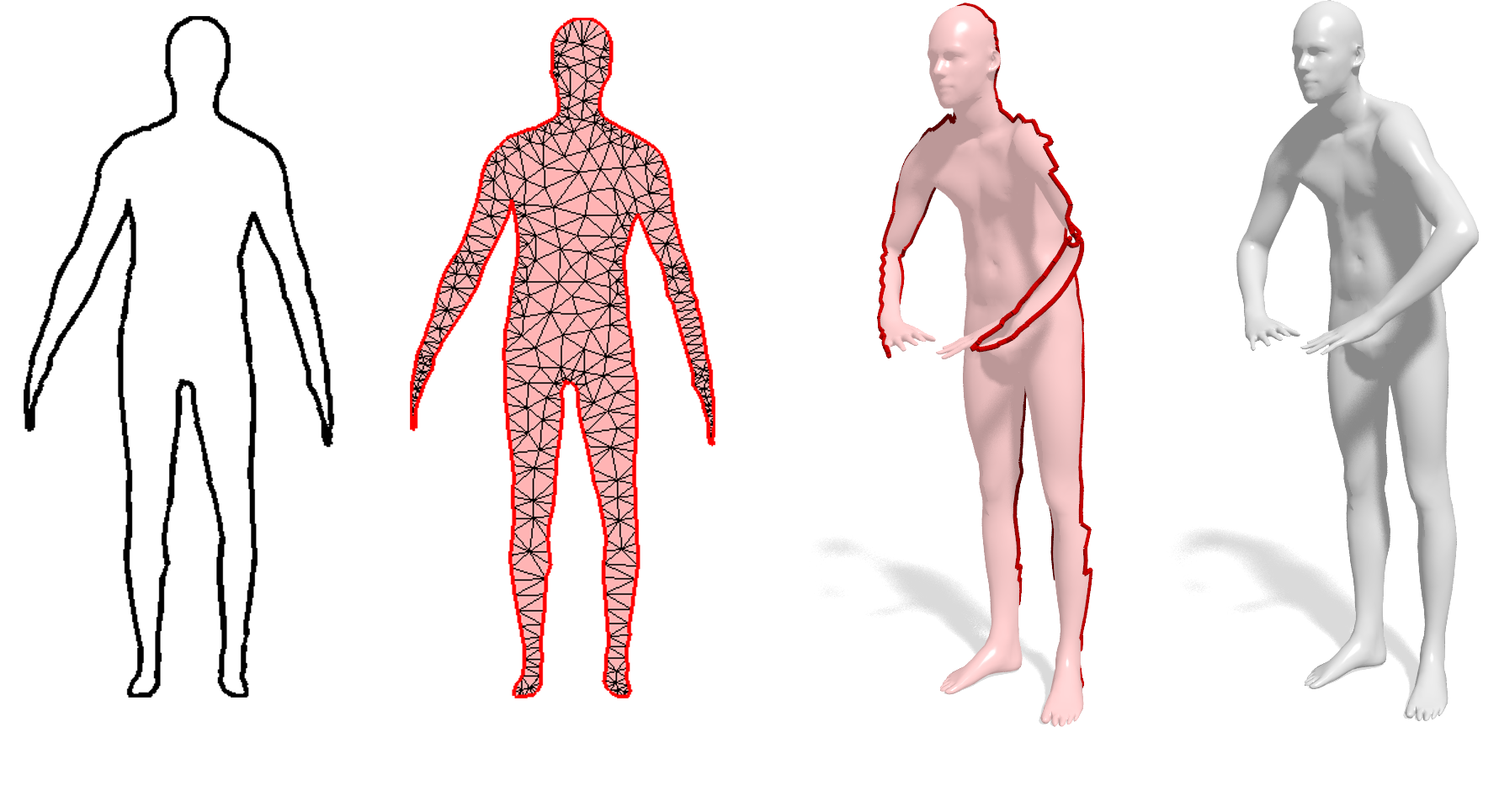}
      \put(4,1){\footnotesize (a) input 2D}
      \put(36.5,1){\footnotesize (b)}
      \put(65,1){\footnotesize (c)}
      \put(82,1){\footnotesize (d) input 3D}
      \put(49.4,25){\footnotesize isometric}
      \put(53.6,22){\footnotesize $\approx$}
    \end{overpic}
    \caption{\label{fig:flatness}Spectral features are constructed by
      considering the query shape (a) as the boundary of a 2D region
      (b), which is assumed to be a near-isometric deformation of a
      sub-region (c) of the 3D target shape (d). The 2D tessellation (b)
      is obtained via~\cite{shewchuk02}.}
  \end{figure}

  \paragraph{Spectral features.}
  \label{para:features}

  In this chapter we advocate the adoption of spectral quantities to
  define compatible features between 2D and 3D shapes. Note that differently from
  existing methods for 2D-to-3D matching, we compute local features independently for each
  given pair of shapes, \ie, no cross-modal metric learning is carried
  out.

  Let $\Delta_N$ be the symmetric Laplace-Beltrami operator on the
  3D shape $N$.  $\Delta_N$ admits an eigendecomposition with
  non-negative eigenvalues $0=\lambda_0<\lambda_1\leq\dots$ and
  corresponding orthogonal eigenfunctions $\psi_j\colon N\to\IR$ such
  that $\Delta_N\psi_j=\lambda_j\psi_j$. For the definition of our
  feature maps, %$f_N\colon N\to\IR^{k_N}$,
  we consider multi-dimensional
  spectral descriptors, \ie, features constructed as functions of
  $\lambda_j$ and $\psi_j$. Note that, since $\Delta_N$ is invariant to
  isometric transformations of $N$, the derived spectral descriptors
  also inherit this invariance.

  Popular spectral features are the scaled Heat Kernel Signature
  (HKS)~\cite{sun09} and the Wave Kernel Signature (WKS)~\cite{aubry11},

  \begin{align}
   \label{eq:spectral}
   f^\mathrm{HKS}_k(x) &= \sum_{j\ge 1} e^{-t_k \lambda_j} \psi^2_j(x),\\
   f^\mathrm{WKS}_k(x) &= \sum_{j\ge 1} e^{-\frac{(\log e_k - \log
      \lambda_j)^2}{2\sigma^2}} \psi^2_j(x),
  \end{align}
  where $k=1,\hdots, d$ are the dimensions of the descriptors describing.
  In our experiments, we used $d = 100$ and the parameters $t_1,\hdots, t_d$
  of HKS and $e_1, \hdots, e_d, \sigma$ of WKS were taken as suggested by the
  respective authors. All descriptors were normalized to have maximum value 1 in
  order to improve their robustness.

  For the 2D case, designing features that can be compared to their 3D
  counterparts can be a challenging task. The difficulty is exacerbated
  here because the shapes are allowed to deform.
  To this end, we consider the solid $U$ of $M$, \ie, $\partial U=M$
  (cf.  Definition~\ref{def:shape}). In other words, we model the 2D
  query as a flat 2-manifold \emph{with} boundary. This new manifold can
  be regarded as a nearly isometric
  transformation (due to flatness and possibly a change in pose) of a
  {\em portion} of the full 3D target (see Fig.~\ref{fig:flatness}).
  Taking this perspective allows us to leverage some recent advances in
  partial 3D matching~\cite{rodola15}, namely that partiality
  transformations of a surface  preserve the Laplacian
  eigenvalues and eigenfunctions, up to some bounded perturbation.

  An implication of this is that we can still compute spectral
  descriptors on the flat solid $U$ and expect them to be
  comparable with those on the full 3D target. By doing so, we make the
  assumption that $U$ can be approximated as a part of
  nearly-isometrically deformed $M$ for the features to be comparable.
  We then define the feature maps
  $f^\text{HKS}_M,f^\text{WKS}_M\colon M\to\IR^d$ on $M$ by
  restricting the descriptors computed on $U$ to its boundary curve
  $\partial U=M$ (see Fig.~\ref{fig:descriptors}, top row).  In other
  words, we have the spectral features $f_M = f_U|_M$ for the
  query shape $M$.

  \begin{figure}[t]
  \centering
  \begin{overpic}
      [trim=0cm 0cm 0cm 0cm,clip,width=0.2\linewidth]{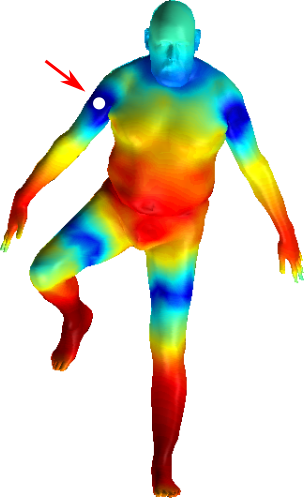}
    \end{overpic}\hspace{5pt}
  \begin{overpic}
      [trim=0cm 0cm 0cm 0cm,clip,width=0.25\linewidth]{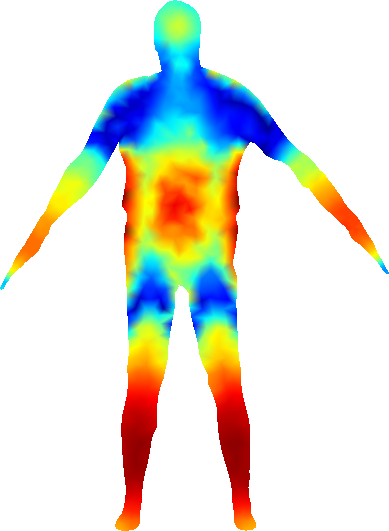}
    \end{overpic}\hspace{5pt}
  \begin{overpic}
      [trim=0cm 0cm 0cm 0cm,clip,width=0.25\linewidth]{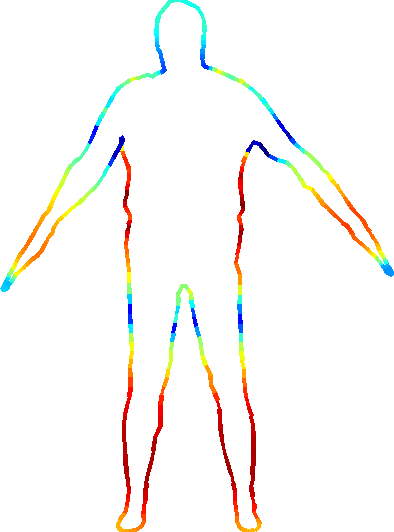}
    \end{overpic}\\
  \begin{overpic}
      [trim=0cm 0cm 0cm 0cm,clip,width=0.2\linewidth]{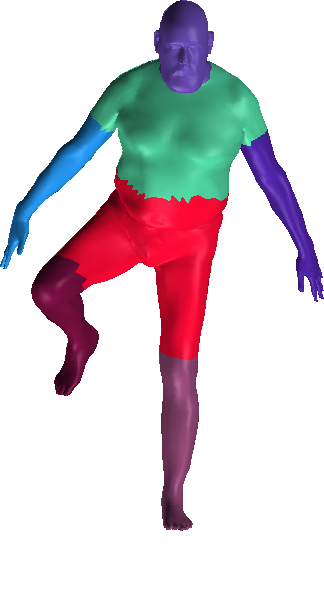}
    \put(12,1){\footnotesize input 3D}
    \end{overpic}\hspace{5pt}
  \begin{overpic}
      [trim=0cm 0cm 0cm 0cm,clip,width=0.25\linewidth]{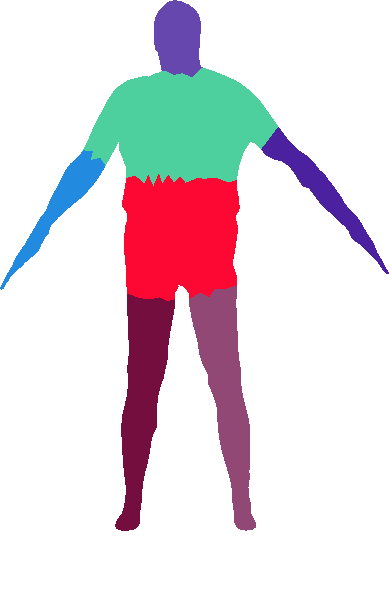}
    \put(4,1){\footnotesize input 2D, tessellated}
    \end{overpic}\hspace{5pt}
  \begin{overpic}
      [trim=0cm 0cm 0cm 0cm,clip,width=0.25\linewidth]{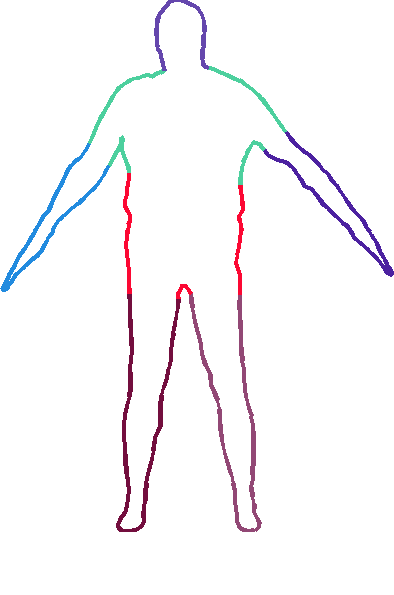}
    \put(20,1){\footnotesize input 2D}
    \end{overpic}
    \caption{\label{fig:descriptors} Computation of local features for
      elastic matching. {\em Top}: $L_1$-distance (blue to red) between
      spectral 3D descriptors of a reference point (white dot) and
      3D descriptors computed on the remaining shape (left) as well as
      2D descriptors on the tessellated query solid (middle). {\em
        Bottom}: Consensus regions detected in 3D and 2D
      using~\cite{rodola-cgf14}. The restricted features (right) are
      used in the energy~\eqref{eq:E_line} to drive the matching
      process. }
  \end{figure}

  \paragraph{Segment features.}
  As an additional `coarse' feature we use corresponding regions on the
  2D query and the 3D shape. Based on the previous observations, we
  are able to automatically extract compatible regions on the two
  objects (namely on $U$ and $N$) by consensus
  segmentation~\cite{rodola-cgf14}, a deformation-invariant region
  detection technique which directly operates with the Laplace-Beltrami
  eigenfunctions of a given shape. The region detection step on the two
  shapes is performed independently; we then obtain the 2D-to-3D region
  mapping by solving a simple linear assignment problem via the
  Hungarian algorithm~\cite{munkres}. Note that this assignment problem
  is typically very small. Assuming we have $r$ regions per shape
  (usually in the range of 5 to 10), the final result of this procedure
  is a pair of corresponding labelings $f_M^\mathrm{SEG} : M\to \mathbb{N}^r$
  and $f_N^\mathrm{SEG} : N\to \mathbb{N}^r$ (see Fig.~\ref{fig:descriptors},
  bottom row).

  % --------------------------------------------------------------------

  \paragraph{Distance function.}

  The final feature maps are obtained by simple concatenation, namely $f
  := (f^\mathrm{HKS}, f^\mathrm{WKS}, f^\mathrm{SEG})$. In order to compare the
  feature maps on $M$ and $N$, we define the distance function:
  \begin{eqnarray}
  \mathrm{dist}(f_M(x) , f_N(y)) &=&
         \|f_M^\mathrm{HKS}(x)- f_N^\mathrm{HKS}(y)\|_1 \\
         &+& \|f_M^\mathrm{WKS}(x)- f_N^\mathrm{WKS}(y)\|_1 \nonumber %&   f_M^\ell(x) = f_N^\ell(y)  \\
  \end{eqnarray}
  if ${f_M^\text{SEG}(x)=f_N^\text{SEG}(y)}$, and set
  ${\dist(f_M(x),f_N(y))=\tau}$ otherwise.  Here, $\tau>0$ is a
  positive value \zorah{larger than any other values in the feature matrix} to prevent matching points belonging to different regions. In our experiments, we used $\tau=10^3$.

  % ------------------------------------------------------------------------------------------

  %
  \begin{figure}[t]
  \centering
  % This file was created by matlab2tikz.
%
%The latest updates can be retrieved from
%  http://www.mathworks.com/matlabcentral/fileexchange/22022-matlab2tikz-matlab2tikz
%where you can also make suggestions and rate matlab2tikz.
%
\definecolor{mycolor1}{rgb}{0.00000,0.70000,1.00000}%
\definecolor{mycolor2}{rgb}{0.00000,1.00000,1.00000}%
\definecolor{mycolor3}{rgb}{1.00000,1.00000,0.00000}%
\begin{tikzpicture}

\begin{axis}[%
width=0.5\linewidth,
at={(0.758in,0.481in)},
scale only axis,
every outer x axis line/.append style={black},
every x tick label/.append style={font=\color{black}, font=\footnotesize},
xmin=0,
xmax=0.25,
xlabel near ticks,
ylabel near ticks,
xlabel={\footnotesize{Geodesic error}},
xmajorgrids,
every outer y axis line/.append style={black},
every y tick label/.append style={font=\color{black}, font=\footnotesize},
ymin=0,
ymax=100,
ylabel={\footnotesize{\% Matches}},
ymajorgrids,
axis background/.style={fill=white},
xtick={0,0.05,0.1,0.15,0.2,0.25},
xticklabels={0,0.05,0.1,0.15,0.2,0.25},
axis x line*=bottom,
axis y line*=left,
legend style={at={(0.97,0.03)},anchor=south east,legend cell align=left,align=left,draw=black}
]
\addplot [color=blue,solid,line width=2.0pt,mark options={solid}]
  table[row sep=crcr]{%
0	0.995845552297165\\
0.01	3.98194436486564\\
0.02	9.83940891292187\\
0.03	19.2142027599399\\
0.04	27.2372906476953\\
0.05	37.9218541574033\\
0.06	52.8397806432489\\
0.07	63.7147991700015\\
0.08	70.2869106699752\\
0.09	76.4650218859194\\
0.1	82.9753040032791\\
0.11	86.8375119551851\\
0.12	89.316163410302\\
0.13	91.2692990845744\\
0.14	92.5640456346495\\
0.15	93.8367604864052\\
0.16	94.8430454980189\\
0.17	95.7605205629184\\
0.18	96.589185681104\\
0.19	97.4178507992895\\
0.2	98.5276335564968\\
0.21	98.9863710889466\\
0.22	99.3562986746823\\
0.23	99.7262262604181\\
0.24	99.911190053286\\
0.25	100\\
};
\addlegendentry{\footnotesize 25};

\addplot [color=mycolor1,solid,line width=2.0pt,mark options={solid}]
  table[row sep=crcr]{%
0	1.07785881529725\\
0.01	2.61737210974646\\
0.02	8.49938225345837\\
0.03	15.8258562372147\\
0.04	26.1478945982622\\
0.05	39.5708550858478\\
0.06	51.3134709254046\\
0.07	59.1311040447208\\
0.08	64.2332853989022\\
0.09	67.5429890831021\\
0.1	71.2602029449294\\
0.11	74.6729994126344\\
0.12	79.1070018026047\\
0.13	82.205658963401\\
0.14	83.8065339355518\\
0.15	85.7312701273976\\
0.16	86.9440787474936\\
0.17	88.2599801510947\\
0.18	89.8802989488182\\
0.19	90.990014785409\\
0.2	92.8459932564139\\
0.21	94.6095525165418\\
0.22	95.9797999992311\\
0.23	96.5618860510805\\
0.24	96.9548133595285\\
0.25	97.9916148502218\\
};
\addlegendentry{\footnotesize 50};

\addplot [color=mycolor2,solid,line width=2.0pt,mark options={solid}]
  table[row sep=crcr]{%
0	0\\
0.01	0.49350443599493\\
0.02	2.52028374750671\\
0.03	9.76956762998845\\
0.04	20.6182250069864\\
0.05	32.9179757232887\\
0.06	43.9830959397512\\
0.07	51.5700720366732\\
0.08	57.1449328749181\\
0.09	60.7940406024885\\
0.1	65.8361984282908\\
0.11	70.780124426981\\
0.12	75.7003110674525\\
0.13	79.3138097576948\\
0.14	82.3023084479371\\
0.15	84.534626719057\\
0.16	86.6627783235102\\
0.17	87.9813359528487\\
0.18	89.20759659463\\
0.19	90.6303208906352\\
0.2	92.3596103470858\\
0.21	93.8852107157864\\
0.22	95.3340828478523\\
0.23	96.9737404938468\\
0.24	97.5612387430869\\
0.25	98.6849710982659\\
};
\addlegendentry{\footnotesize 75};

\addplot [color=white!50!green,solid,line width=2.0pt,mark options={solid}]
  table[row sep=crcr]{%
0	0\\
0.01	0.388349514563107\\
0.02	2.9558056099152\\
0.03	7.96912372254838\\
0.04	19.1834665815067\\
0.05	32.3509225241229\\
0.06	41.8134471608572\\
0.07	47.3676773386087\\
0.08	53.2995790192286\\
0.09	58.4687971401202\\
0.1	64.6032981109084\\
0.11	69.2959910972152\\
0.12	73.2838860352705\\
0.13	77.0840351720562\\
0.14	79.1493399006823\\
0.15	81.1663707903871\\
0.16	83.5943857949104\\
0.17	85.7288587250697\\
0.18	87.6699080575501\\
0.19	89.70766918887\\
0.2	92.0367012759901\\
0.21	93.5897678538308\\
0.22	95.1394277538259\\
0.23	96.7846638933786\\
0.24	97.3706124828247\\
0.25	97.8553070363038\\
};
\addlegendentry{\footnotesize 100};

% \addplot [color=mycolor3,solid,line width=2.0pt,mark options={solid}]
  % table[row sep=crcr]{%
% 0	0.191084894985436\\
% 0.01	0.660128046955417\\
% 0.02	4.17959425029749\\
% 0.03	12.1019084676672\\
% 0.04	20.5682352853313\\
% 0.05	31.0204858958752\\
% 0.06	41.7175662969277\\
% 0.07	49.2365055460849\\
% 0.08	54.3187603678533\\
% 0.09	59.3721293232858\\
% 0.1	64.8327469543705\\
% 0.11	69.0252582208713\\
% 0.12	73.1833050777837\\
% 0.13	77.5372114386431\\
% 0.14	79.6576846381921\\
% 0.15	81.3955876531961\\
% 0.16	84.0957975160011\\
% 0.17	86.0230251611313\\
% 0.18	88.0423011717962\\
% 0.19	90.1640813461093\\
% 0.2	92.4725722010202\\
% 0.21	94.4970761107418\\
% 0.22	95.7485978027887\\
% 0.23	97.097395759427\\
% 0.24	97.9620195407803\\
% 0.25	98.5874742304685\\
% };
% \addlegendentry{\footnotesize 150};

\addplot [color=red,solid,line width=2.0pt,mark options={solid}]
  table[row sep=crcr]{%
0	0.379916753011383\\
0.01	1.33974502348468\\
0.02	6.32708949501588\\
0.03	12.6996197718631\\
0.04	20.5274733467531\\
0.05	31.266681577574\\
0.06	41.287184075151\\
0.07	48.3155781997839\\
0.08	53.6201173382739\\
0.09	59.470123414831\\
0.1	64.7498695295609\\
0.11	68.3146946991724\\
0.12	72.0815626630881\\
0.13	76.2162266481396\\
0.14	79.0419742041303\\
0.15	81.3254593175853\\
0.16	83.7826154083681\\
0.17	86.3381966960012\\
0.18	88.5004631773969\\
0.19	90.7761127264594\\
0.2	92.8792962051741\\
0.21	94.9854618653545\\
0.22	96.2301498546186\\
0.23	97.2946169276369\\
0.24	97.9278273283673\\
0.25	98.2761134193454\\
};
\addlegendentry{\footnotesize 200};

\end{axis}
\end{tikzpicture}%
  \includegraphics[width=0.27\linewidth]{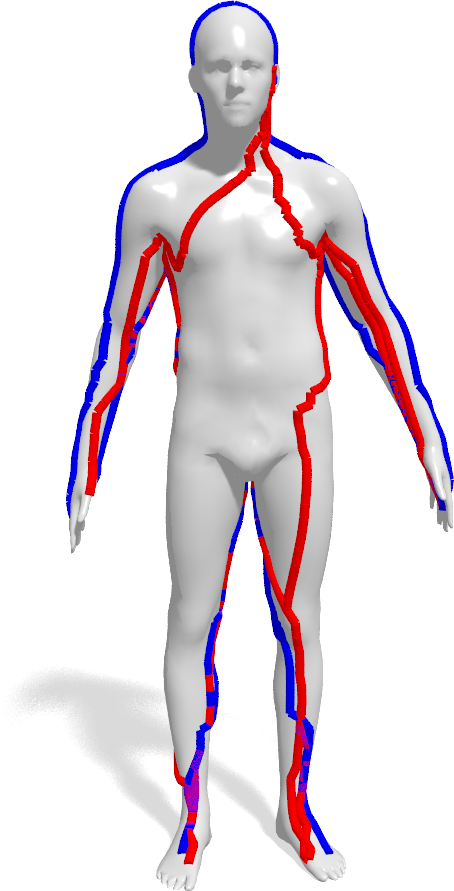}
  \caption{\label{fig:sensitivity} Sensitivity of the spectral features
    to increasing number of eigenfunctions. On the right we show typical
    solutions obtained when using 25 (blue) and 200 (red)
    eigenfunctions. \zorah{See~Eq.~\eqref{eq:err} for details on the error curve.}}
  \end{figure}

  \subsection{Sensitivity analysis}
  In most shape analysis applications, only the first $k$ eigenfunctions of
  $\Delta$ are used to define $f^\mathrm{WKS}$ and $f^\mathrm{HKS}$. In the classical
  3D-to-3D setting, for large $k$ the resulting descriptors tend to be
  more accurate, but at the same time become more sensitive to the lack
  of isometry relating the two shapes.
  We performed a sensitivity analysis of our elastic matching method on
  a subset of our FAUST-derived dataset to determine the optimal number of
  eigenfunctions.

  We observed a similar trend to the 3D-to-3D case in our 2D-to-3D setting where
  naturally the isometry is only approximate giving better results for a low $k$,
   as reported in Fig.~\ref{fig:sensitivity}.
  From this analysis we selected $k=25$ as
  the fixed number of eigenfunctions for all subsequent experiments. In
  the figure, we plot cumulative curves showing the percent of matches
  with a geodesic error smaller than a variable
  threshold (see~Eq.~\eqref{eq:err}).

  \subsection{Runtime}\label{sec:runtime}

  We implemented our method in C++ and ran it on an Intel Core i7
  3.4GHz CPU.  In Fig.~\ref{fig:runtimes} we show the execution times of
  our method on the FAUST dataset (10 queries and 100 targets). The
  plotted results show that for 3D shapes of fixed size, in practice the
  runtime grows linearly with the number of 2D query points $m$ and
  quadratic with the number of points on the 3D target $n$.

  Fig.~\ref{fig:eps_runtimes} shows the decrease of the average runtime with growing
  parameter $\epsilon$ for the approximation algorithm. The plot shows that
  after decreasing rapidly over $\epsilon \approx 0.05$ the runtime stays
  relatively constant with no chang in the MAP. Therefore, choosing a small
  $\epsilon$ leads to a good quality-runtime ratio.

  \begin{figure}[t]
  \centering
  \begin{overpic}
    [trim=2cm 1cm 4.5cm 1cm,clip,width=0.45\linewidth]{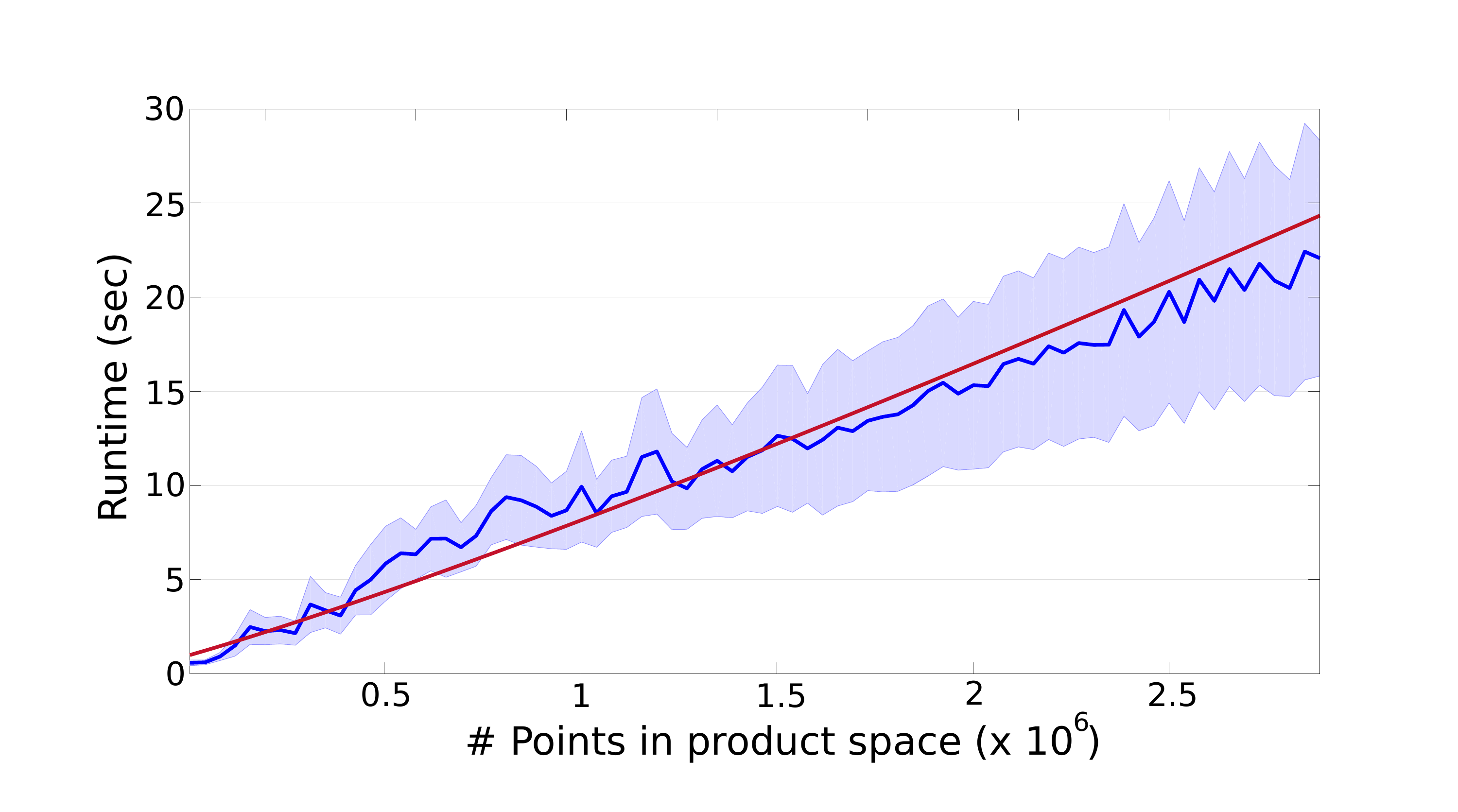}
  \end{overpic}
    \begin{overpic}
      [trim=2cm 1cm 4.5cm 1cm,clip,width=0.45\linewidth]{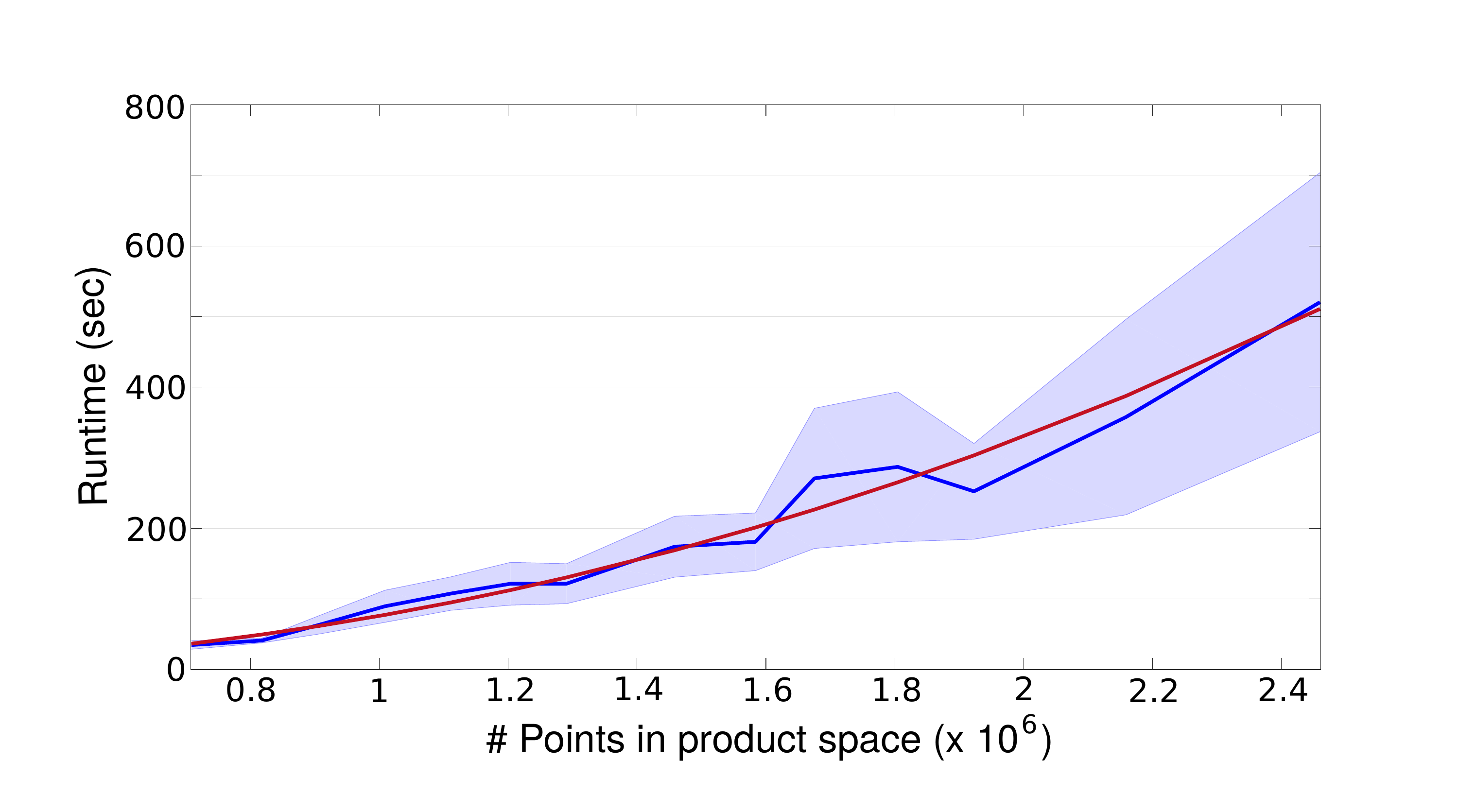}
    \end{overpic}
    \caption{\label{fig:runtimes}Blue: Mean runtime with standard deviation shaded.
     Red: Optimal curve of form $a \cdot x^b$ fitted to the data.
     Left: Runtime of our matching method on 3D targets with $\sim$7k vertices (FAUST dataset).
    On the $x$ axis we vary the size of the 2D query from 25 to 400 points. The
    fitted curve has $a = 0.0245$ and $b = 1.1518$ so the real runtime is nearly
    linear in $m$.
    Right: Runtime with a $100$ vertex query and upsampled FAUST shapes. On the
    $x$ axis we vary the size of the 3D targets from 7k to 24k. The fitted curve
    has $a = 2.6466 \cdot 10^{-7}$ and $b = 2.1147$ making the real runtime
    nearly quadratic in $n$ with a really small constant.}
  \end{figure}

  % ------------------------------------------------------------------------------------------

  \subsection{Sketch-based shape retrieval}\label{sec:exp-retrieval}

  We retrieve the most fitting element from a collection of 3D shapes in comparison to the query 2D silhouette.
  The matching energy $E(\phi^*)$ obtained by our optimization problem is a suitable measurement for shape similarity and we match the query to every instance of the collection, choosing the one with lowest energy as the result.

  We evaluated the performance of our retrieval pipeline on the extended
  2D-to-3D TOSCA dataset.
  As baselines for our comparisons we use the spectral retrieval method
  of \cite{reuter06} and a pure region-based retrieval technique using
  segments computed with \cite{rodola-cgf14}; both are the foundation
  for the features we use. The rationale of these
  experiments is to show that these features are not
  sufficient to guarantee good retrieval performance. However, using
  these quantities in our elastic matching pipeline enables promising
  results even in challenging cases.
  The first baseline method we compare against is Shape-DNA
  \cite{reuter06} using the (truncated) spectrum of the Laplace-Beltrami
  operator as a global isometry-invariant shape
  descriptor. We apply this method to compare targets in the 3D database
  with flat tessellations of the 2D queries.
  The second method used in the comparisons is a simple evaluation of
  the matching cost obtained when putting the consensus regions into
  correspondence via linear assignment (see Fig.~\ref{fig:descriptors}).
  Since this step typically produces good coarse 2D-to-3D matchings, it
  can be used as a retrieval procedure per se.

  \begin{figure}[b]
  \centering
  \begin{overpic}
    [trim=3.8cm 0.5cm 1.7cm 1.6cm,clip,width=0.5\linewidth]{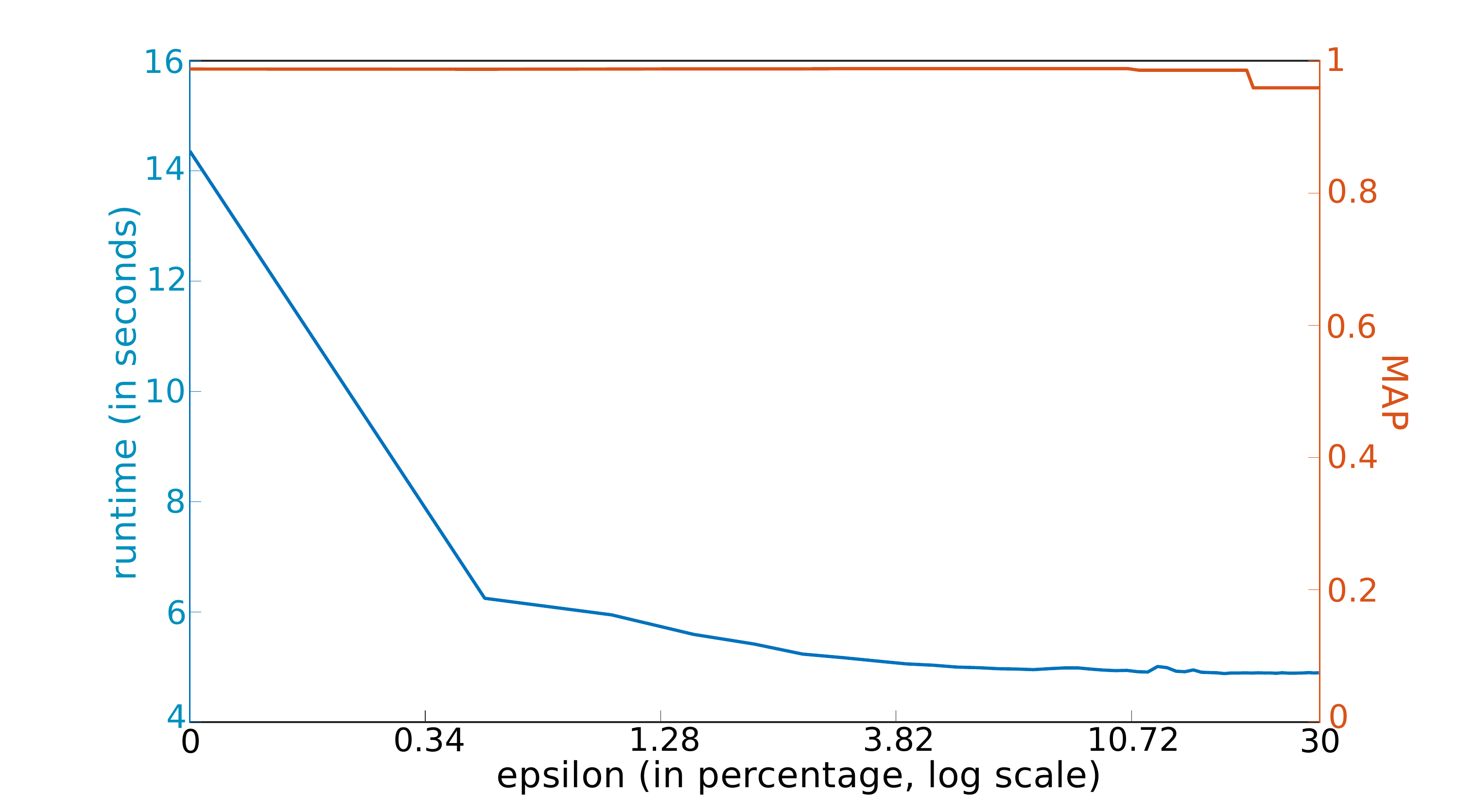}
  \end{overpic}
    \caption{\label{fig:eps_runtimes} The average runtime of the TOSCA dataset with
    values between $0$ (globally optimal) and $0.3$ (within 30\% of the optimum).
    The runtime nearly stagnates after $\epsilon = 0.05$ (only one or two iterations)
     but small $\epsilon$ already lead to significant improvement in the runtime
     and comparable results (as shown in Table \ref{tab:results}). The segmentation
     feature was not used. }
  \end{figure}

  We test the global optimization using all features we presented before. For
  the approximate optimization we use all features except the segment features.
  The reason is we noticed that the energies without the segments are
  comparable for same-class matching; the only difference being the matchings
  only take place on one symmetric half of the 3D shape (see Fig.~\ref{fig:symmetry}).
  Although this is not the favored solution, since it does not substantially
  change the energy, it is suitable for retrieval. Unfortunately, without the
  coarse matching of the segments the globally optimal solution for interclass
  matchings can take longer to converge because there are many equally best solutions -
  still with a high energy in comparison. Therefore, we use the segment features
  for the globally optimal method but show that we can get similar results
  without this feature and the approximate method.

  The results of the shape retrieval experiments are reported in
  Table~\ref{tab:results}. $\epsilon = 0.001$ was used for the approximation
  experiments. We used average precision (AP) and mean
  average precision (MAP) as measures of retrieval
  performance\footnote{\emph{Precision} measures the
  percentage of correctly retrieved shapes.}. The results slightly differ from
  the ones reported in \cite{LRSBC16} due to a small bug in the code.
  Additional qualitative
  examples of solutions obtained with our method are shown in
  Fig.~\ref{fig:rankings}. Fig.~\ref{fig:symmetry} shows an embedding
  and clustering of all 3D models using the matching energies (with
  the approximation algorithm).

  \begin{figure}[t]
  \centering
      \includegraphics[trim=0cm 0cm 0cm 0cm,clip,width=0.25\linewidth]{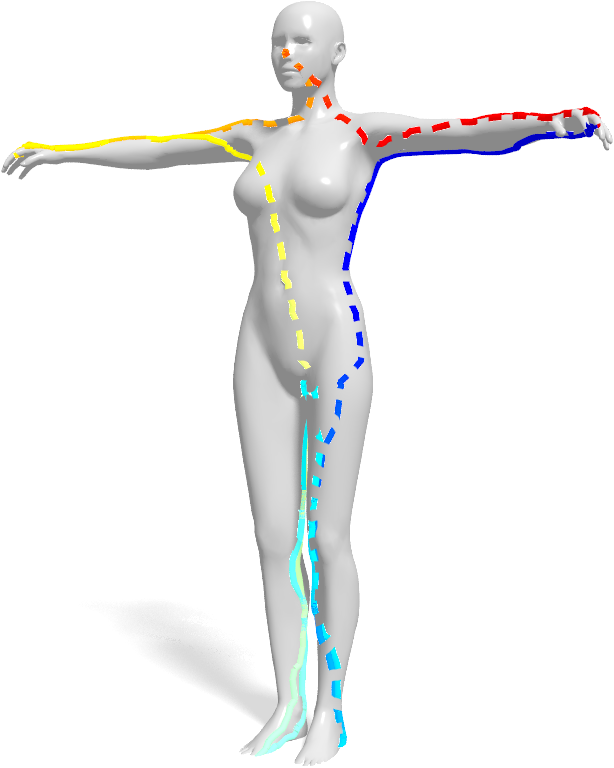}
      \includegraphics[trim=0cm 0cm 0cm 0cm,clip,width=0.25\linewidth]{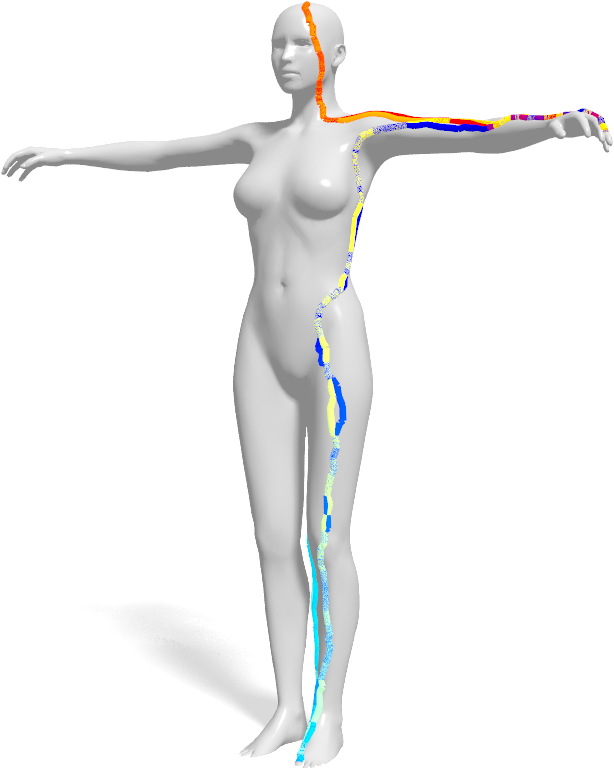}
      % This file was created by matlab2tikz.
%
%The latest updates can be retrieved from
%  http://www.mathworks.com/matlabcentral/fileexchange/22022-matlab2tikz-matlab2tikz
%where you can also make suggestions and rate matlab2tikz.
%
\definecolor{mycolor1}{rgb}{1.00000,0.60000,0.00000}%
\definecolor{mycolor2}{rgb}{0.80000,1.00000,0.00000}%
\definecolor{mycolor3}{rgb}{0.00000,1.00000,0.40000}%
\definecolor{mycolor4}{rgb}{0.00000,1.00000,1.00000}%
\definecolor{mycolor5}{rgb}{0.20000,0.00000,1.00000}%
\definecolor{mycolor6}{rgb}{0.80000,0.00000,1.00000}%
\definecolor{mycolor7}{rgb}{1.00000,0.00000,0.60000}%
\begin{tikzpicture}[scale=0.75]

\clip (0.5,0) rectangle (7.3,4.9);

\begin{axis}[%
hide axis,
width=3.1in,
height=1.811in,
scale only axis,
scale=1.4,
xmin=-30,
xmax=30,
ymin=-20,
ymax=25,
axis background/.style={fill=white},
axis x line*=bottom,
axis y line*=left,
]
\addplot[only marks,mark=*,mark options={},mark size=0.7906pt,color=red] plot table[row sep=crcr]{%
-25.7061870130095	-8.32316520745946\\
-22.4983234804446	-9.78638674996485\\
-25.6331295002135	-9.58358759898243\\
-24.2567681061133	-9.40039201466008\\
-22.8059776645963	-9.70333240294033\\
-25.4770266984221	-8.86048081327069\\
-25.2825494201239	-9.35498594726497\\
-24.6459750470513	-10.2070928289784\\
-25.8517391950952	-9.51823627405765\\
-24.9874066511289	-8.8729297417764\\
-26.0692448832202	-8.99245819173323\\
};
\draw[rotate around={0:(136,41)},red,fill=red!10] (57,107) ellipse (12pt and 7pt);
\node at (57.5,133) {cat};
%\addlegendentry{cat};

\addplot[only marks,mark=*,mark options={},mark size=0.7906pt,color=mycolor1] plot table[row sep=crcr]{%
-15.9138500960859	-16.5160236906396\\
-16.4439948073962	-15.6392589953414\\
-16.3793615087315	-16.2983581237747\\
-16.3628457540301	-15.5093515234943\\
-16.0983316401937	-16.4822898230285\\
-16.483205663092	-15.2183764106625\\
};
\draw[rotate around={20:(136,41)},mycolor1,fill=mycolor1!10] (136,41) ellipse (3pt and 5.5pt);
\node at (146,20) {centaur};
%\addlegendentry{centaur};

\addplot[only marks,mark=square*,mark options={},mark size=0.7906pt,color=red, shift={(-200,-100)}] plot table[row sep=crcr]{%
21.9923783655238	16.0769252687003\\
21.2795805776282	15.6667487325097\\
18.1273963281802	19.4021781999104\\
20.016062415072	18.1706458082932\\
24.4585309169136	16.1143387687076\\
21.7246619183468	17.5236474528848\\
20.2621189894171	17.7216594763838\\
21.3868823631063	13.4832010726468\\
20.7708750182469	15.0171801122232\\
21.6552452405132	13.9706688921056\\
20.0504385989202	14.6039978473483\\
18.5021317530851	21.9109653229905\\
21.7097292687255	13.1910109486716\\
20.0345935570551	12.9250047120366\\
};
\draw[rotate around={-40:(328,257)},black,fill=black!10] (327,254) ellipse (40pt and 26pt);
\node at (347,165) {humans};
%\addlegendentry{david};

\addplot[only marks,mark=*,mark options={},mark size=0.7906pt,color=green!75!mycolor2] plot table[row sep=crcr]{%
-17.391222100494	-3.74606641706863\\
-19.2229224133857	-4.16037218071808\\
-15.5726456654873	-5.11470715157898\\
-16.7308069329988	-4.18475763594098\\
-16.2133325466878	-4.91684962369828\\
-15.7036387307449	-4.82495659601147\\
-19.1197698919833	-4.16023788646538\\
-19.1454310831552	-4.13405218799834\\
-17.5484179833855	-4.19537996636259\\
-19.0093149728513	-5.30393427787954\\
-18.6619045902998	-5.81985454573828\\
};
\draw[rotate around={0:(162,150.5)},green!75!mycolor2,fill=green!75!mycolor2!10] (125,153) ellipse (12pt and 7pt);
\node at (126,179) {dog};
%\addlegendentry{dog};

\addplot[only marks,mark=square*,mark options={},mark size=0.7906pt,color=mycolor3] plot table[row sep=crcr]{%
-15.0561808795033	-16.6155275024351\\
-11.8537048242601	-5.18502433676305\\
-10.7100988049221	-4.24681127293918\\
-10.8359493569048	-3.76270407361937\\
-11.8998702434312	-4.46899459768938\\
-10.5272546302893	-4.69955918862084\\
-13.2441239744891	-4.42424477816158\\
-10.2404297619809	-3.71535621410944\\
-9.43363436203128	-4.39785709243563\\
-9.54102074246242	-4.01853293814299\\
-9.93849361013639	-4.26628875986622\\
-10.6223010785298	-6.06072460875534\\
-11.8788847889804	-5.24528506976664\\
-9.95999270245071	-5.72722709148134\\
-9.91064336731434	-6.13594945522733\\
-9.56175114264456	-5.11556890495404\\
-10.487308391681	-5.59979076958603\\
};
\draw[rotate around={0:(162,150.5)},mycolor3,fill=mycolor3!10] (188.5,150.5) ellipse (13pt and 8pt);
\node at (189,182) {horse};
%\addlegendentry{horse};

\addplot[only marks,mark=*,mark options={},mark size=0.7906pt,color=mycolor4] plot table[row sep=crcr]{%
-18.1533177891658	-11.4102556696892\\
-18.8342294654198	-9.80013921428249\\
-18.1868990189761	-8.52880839397114\\
-18.5658390644973	-11.2311522131935\\
-18.2543502093681	-12.5098821515779\\
-19.1811502657502	-9.10045759749459\\
-17.6853006240631	-8.42912318506582\\
-17.3807260340387	-9.18933575229723\\
-18.3299175386317	-8.16553999244086\\
-18.6409641262602	-11.5484307394761\\
-17.3700212289653	-9.50743674822187\\
-18.741904658445	-10.5432846162438\\
-18.7151153507216	-7.29237591892657\\
-18.302942709759	-8.71919221737122\\
-18.000472030358	-9.73872246722523\\
};
\draw[rotate around={15:(117,100)},mycolor4,,fill=mycolor4!10] (117,100) ellipse (7pt and 15pt);
\node at (155,77) {lioness};
%\addlegendentry{lioness};

\addplot[only marks,mark=*,mark options={},mark size=0.7906pt,color=blue!60!mycolor4, shift={(-200,-100)}] plot table[row sep=crcr]{%
22.3724816231077	14.4559354878952\\
23.5381906540329	15.9345866123393\\
17.727033261104	21.4365600217525\\
23.0581996330542	16.4348726253236\\
21.7489816058816	15.4657762773053\\
18.220168560663	20.596584804794\\
21.5230392061457	17.8473901541331\\
19.7186546469138	18.9266616850045\\
18.3569320731468	18.717756879425\\
17.3335659468411	21.5442656934547\\
19.6937301255121	17.6847707049863\\
17.9312824485019	21.8883493038351\\
26.4343839524184	10.9889031083592\\
21.9258070153756	12.6716999719948\\
17.4251283319948	21.159940097627\\
22.2346355891473	15.4165897634195\\
20.3387126995114	16.6895509173931\\
17.4353008108083	22.0795169460823\\
19.0698037977515	18.0575802564327\\
24.6219553533097	15.8046162059868\\
};
%\addlegendentry{michael};

\addplot[only marks,mark=*,mark options={},mark size=0.7906pt,color=mycolor5] plot table[row sep=crcr]{%
-19.0753454197408	-17.4113905139107\\
-19.8135394637465	-16.3218702655492\\
-19.9872904770246	-17.0085175705673\\
-19.2159130418159	-17.1301083218741\\
-19.4935144649366	-16.8829335266588\\
-19.5916269256187	-17.7568097202545\\
};
\draw[rotate around={-40:(104,30)},mycolor5,,fill=mycolor5!10] (104,30) ellipse (6pt and 5pt);
\node at (63,30) {seahorse};
%\addlegendentry{seahorse};

\addplot[only marks,mark=*,mark options={},mark size=0.7906pt,color=mycolor6, shift={(-200,-100)}] plot table[row sep=crcr]{%
27.3706895794483	12.9060745520793\\
27.7902962631205	13.5469452840439\\
28.9966996178492	13.2704127487051\\
28.068983049843	13.5261138258908\\
25.9462221203223	16.0711942298857\\
27.0814691510646	11.4573623330214\\
27.5393165055076	14.0424586820986\\
28.2868385053046	14.1085716214832\\
18.3173801319617	17.4715252508638\\
28.5728442944805	12.8698233029549\\
27.8870281873644	14.5269008693289\\
27.8965303079733	12.8727674713243\\
23.5845880812162	9.10647472741062\\
21.6691535487056	9.06106080227708\\
23.927842680843	9.47163904056614\\
21.6206286262817	10.2230137759189\\
25.1799168840996	10.1431077213116\\
22.6969331787462	10.2167339924814\\
22.3968033175765	10.0700877164242\\
22.6815550564758	9.26311104959729\\
22.407207265268	9.39580581804513\\
23.2921854569327	9.64355094687255\\
19.5910417576198	22.2762636503647\\
22.6969331787459	10.2167339924818\\
};
%\addlegendentry{victoria};

\addplot[only marks,mark=*,mark options={},mark size=0.7906pt,color=mycolor7] plot table[row sep=crcr]{%
-14.2391032816322	-9.34146775466156\\
-14.2730801542628	-9.77540757809286\\
-14.7286881397652	-9.73120755605578\\
};
\draw[rotate around={0:(-14,-9)},mycolor7,,fill=mycolor7!10] (155,105) ellipse (4pt and 4pt);
\node at (180,100) {wolf};
%\addlegendentry{wolf};

\end{axis}
\end{tikzpicture}%
    \caption{\label{fig:symmetry} (Left and Middle) Example of the same query matched to the same
      shape, once using the segmentation features (left) and once without (right).
      Dashed lines indicate a path on the backside normally not visible in this
      perspective. The matchings are very similar but without segments the path
      only considers one of the symmetric sides (normally because it is slightly shorter). (Right) 2-dimensional embedding of all 3D models using the
      approximate matching energies to each of the $k$ sketches as $k$-dimensional
      coordinates for each model. Except for one outlier (horse close to the
      centaurs), all classes are perfectly separable. }
  \end{figure}

  \begin{table}[t]
    \centering
    {\renewcommand{\arraystretch}{1.0}% make 30% larger
      \begin{tabular}{lcccc}
        \hline
        &\textbf{Global (Ours)}& \textbf{Approximation (Ours)}& Shape DNA & Consensus Segmentation \\
        \hline
        {\em cat} &\textbf{1.0000}&\textbf{1.0000}&0.2852&0.2518\\
        {\em dog} &\textbf{1.0000}&0.9615&0.3519&0.3970\\
        {\em horse} &\textbf{0.9987}&0.9313&0.4469&0.3432\\
        {\em human} &\textbf{1.0000}&\textbf{1.0000}&0.7447&0.9985\\
        {\em lioness} & \textbf{0.9984}&0.9587&0.7022&0.5419 \\
        {\em seahorse} & \textbf{1.0000}&\textbf{1.0000}&0.0779&\textbf{1.0000} \\
        {\em wolf} &\textbf{1.0000}&0.8082&0.2230&0.2470\\
        {\em human (hd)} & \textbf{1.0000}&\textbf{1.0000}&0.7096&0.9462 \\
        {\em cat (hd)} & \textbf{0.9888}&0.9772&0.8622& - \\
        \hline
        MAP &\textbf{0.9984}&0.9597&0.4720&0.5907\\
        \hline
      \end{tabular}}
    \vspace{2mm}
    \caption{\label{tab:results}
      Retrieval results on the 2D-to-3D extended dataset.
      For each method we show per-class AP and, in the last row, the MAP.
      The global algorithm uses all features and the approximation does not
      use the segmentation. $\epsilon$ was chosen to be $0.001$. hd refers
      to handdrawn sketches. Shape DNA refers to \cite{reuter06} and Consensus Segmentation to
      \cite{rodola-cgf14}.
    }
  \end{table}

  \begin{figure*}[t]
  \centering
  \includegraphics[width=0.19\linewidth]{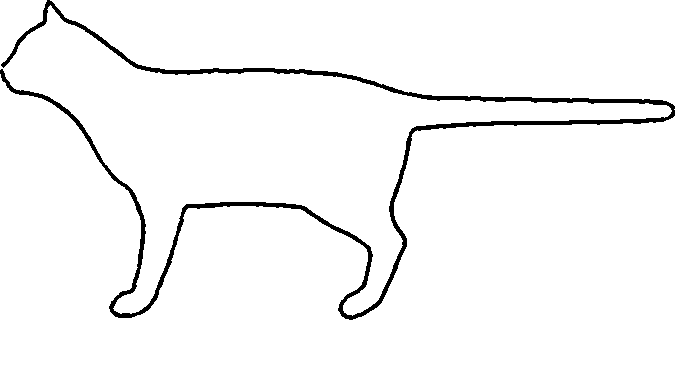}\hspace{5pt}
  \includegraphics[width=0.15\linewidth]{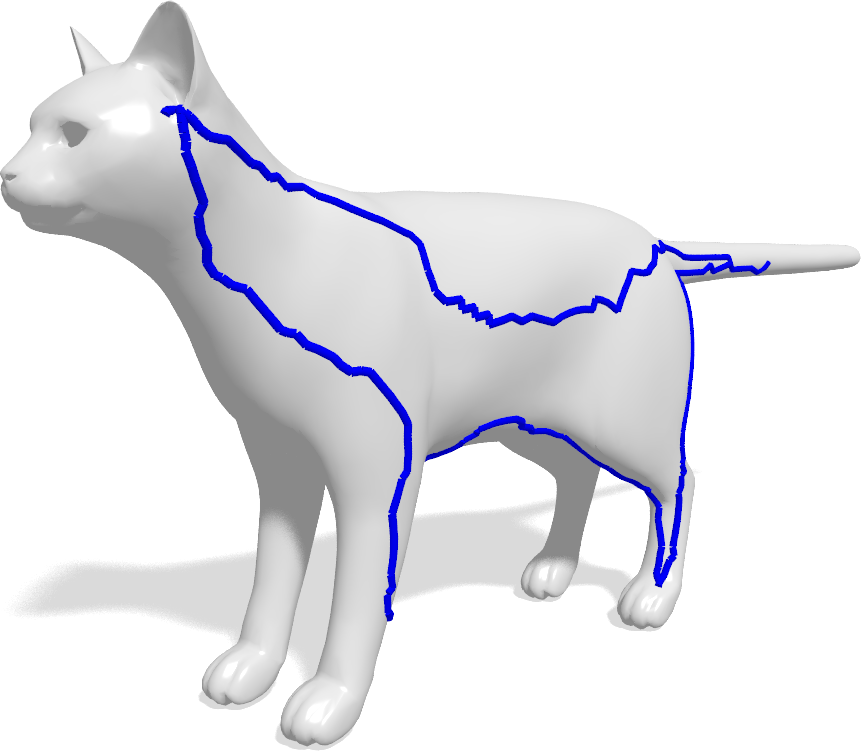}
  \includegraphics[width=0.13\linewidth]{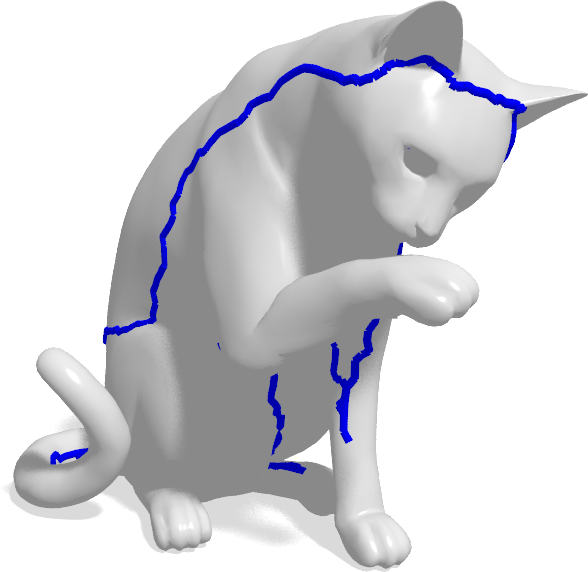}
  \includegraphics[width=0.15\linewidth]{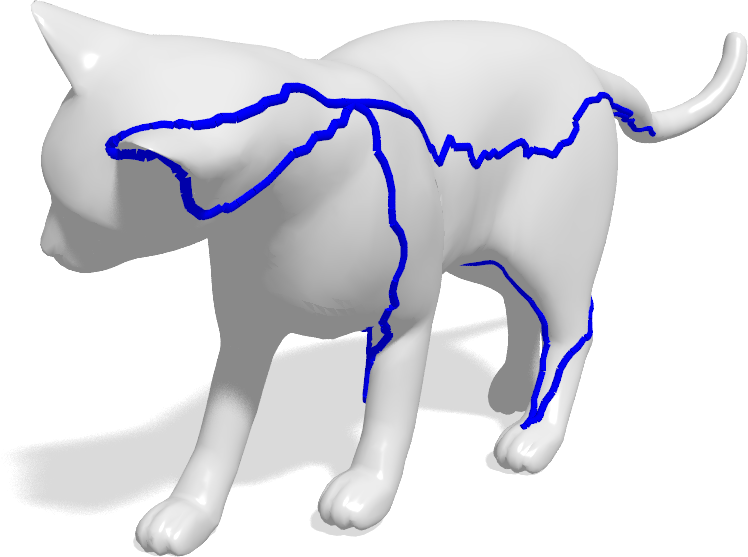}
  \includegraphics[width=0.16\linewidth]{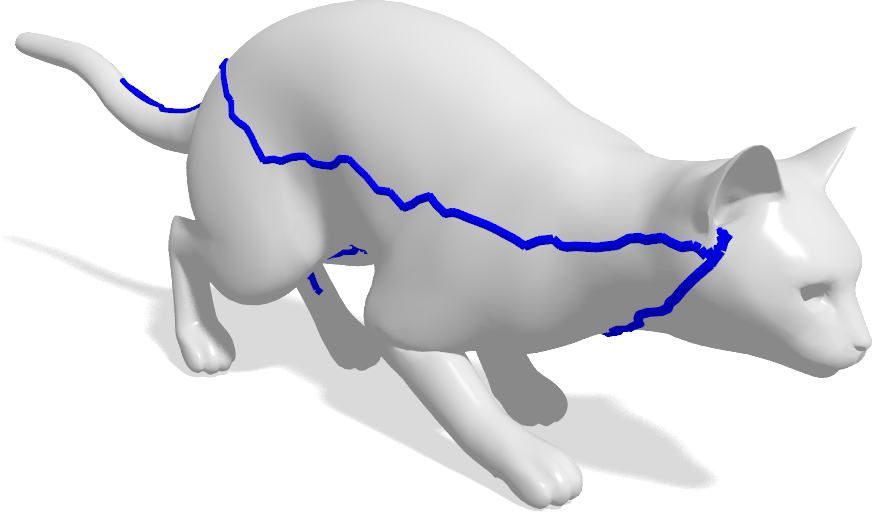}
  \includegraphics[width=0.16\linewidth]{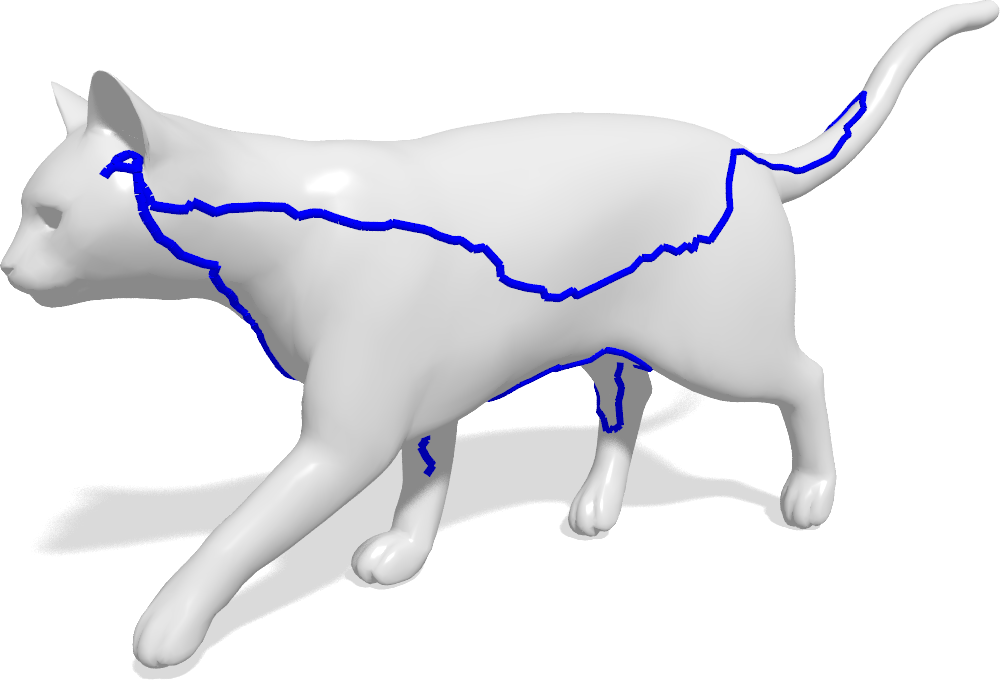}\\
  \includegraphics[width=0.18\linewidth]{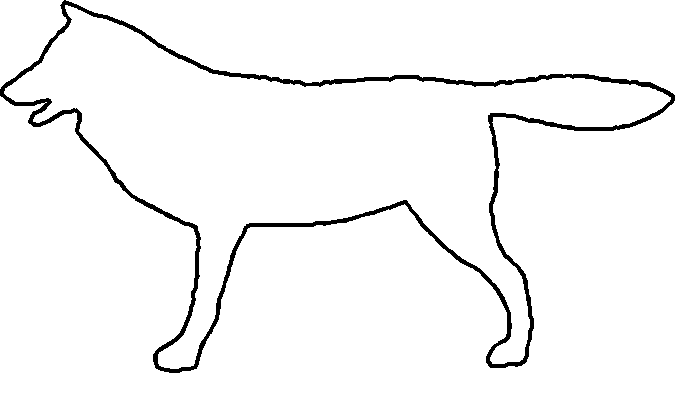}\hspace{5pt}
  \includegraphics[width=0.15\linewidth]{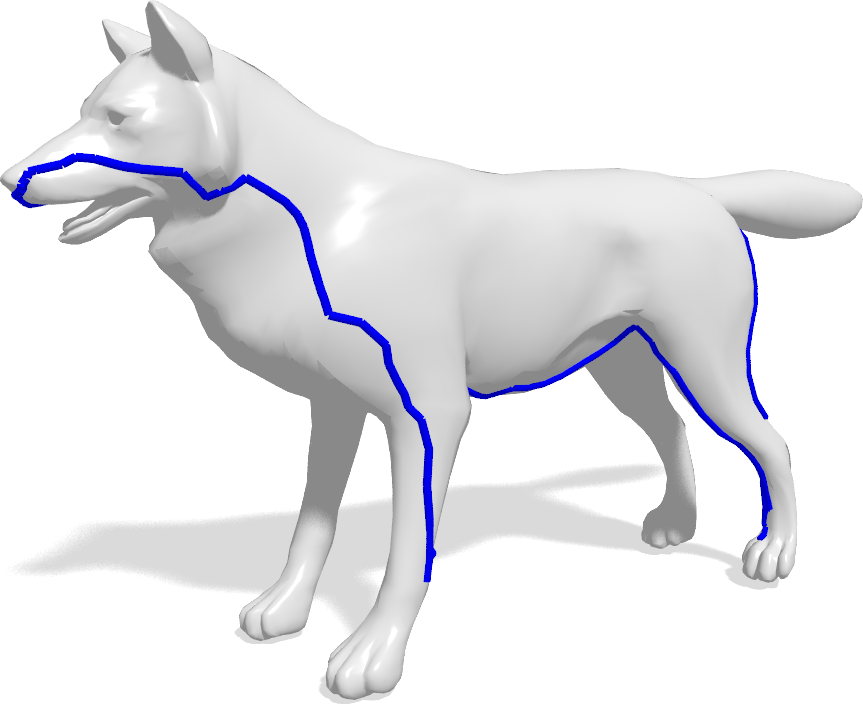}
  \includegraphics[width=0.13\linewidth]{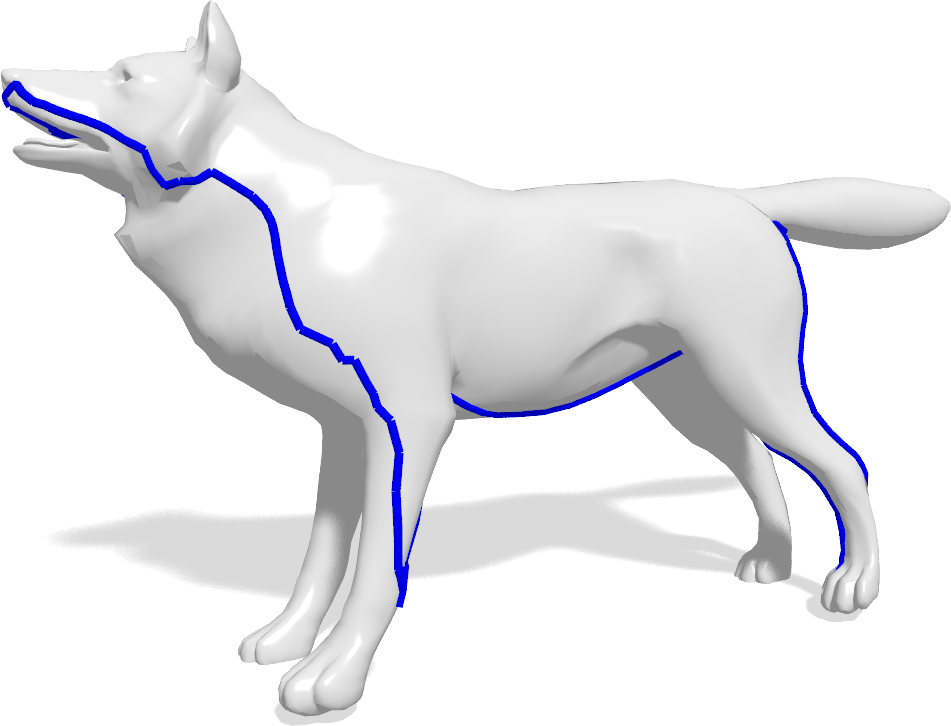}
  \includegraphics[width=0.15\linewidth]{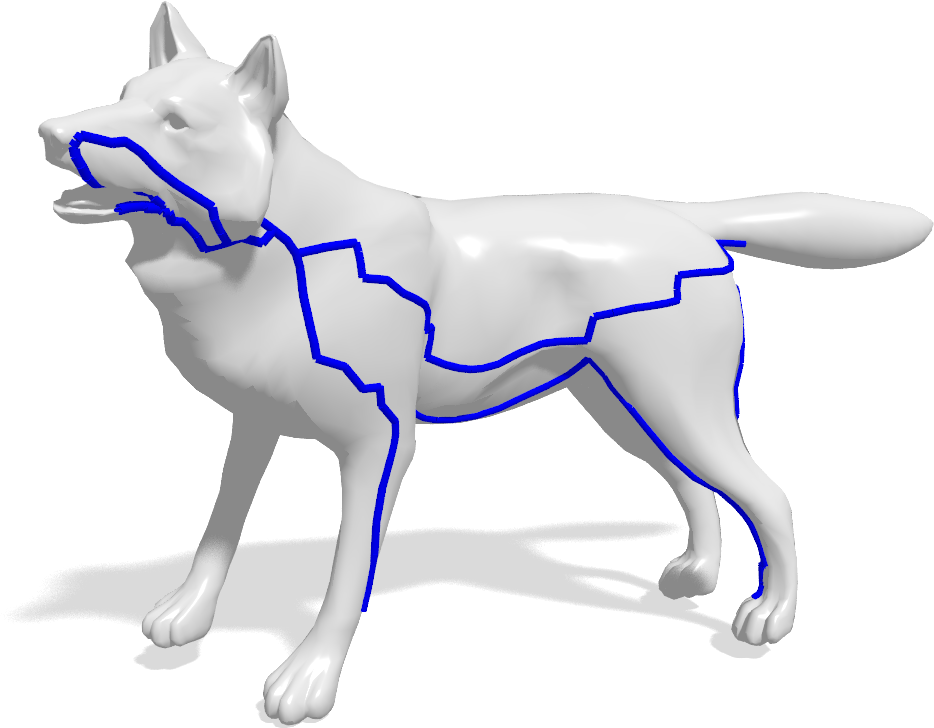}
  \includegraphics[width=0.16\linewidth]{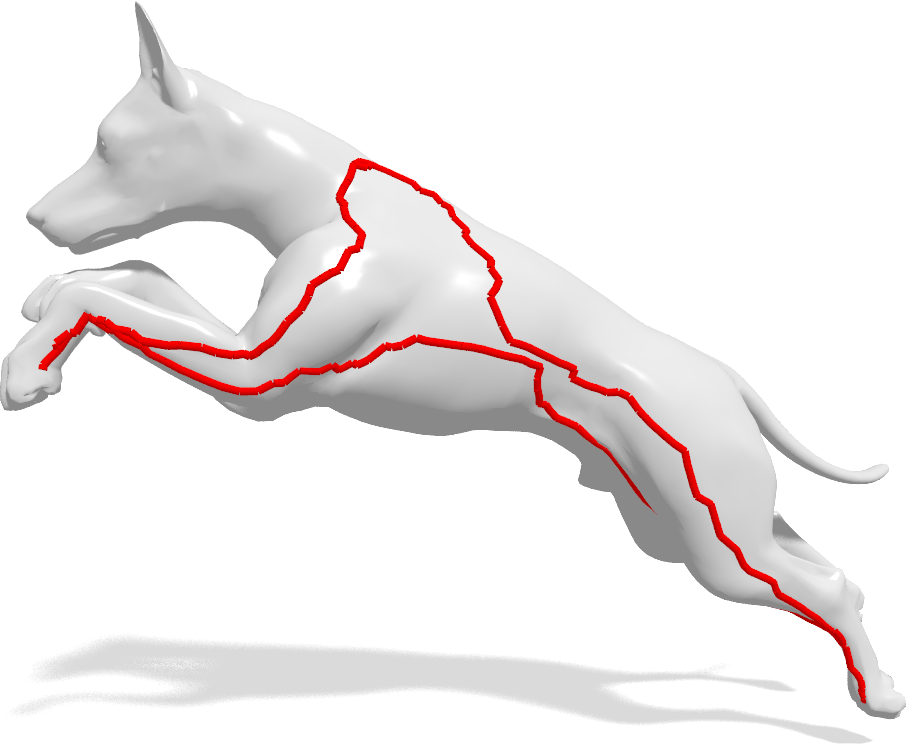}
  \includegraphics[width=0.16\linewidth]{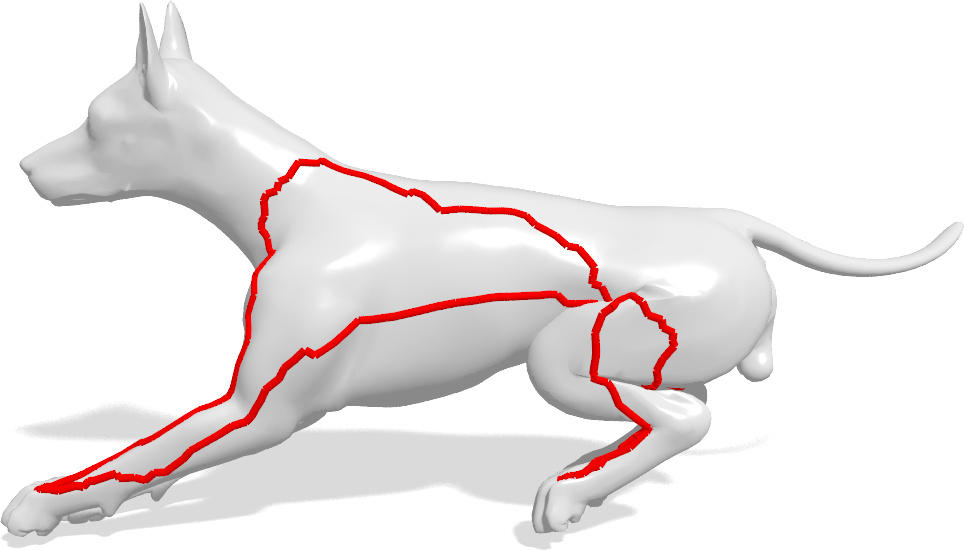}\\
  \includegraphics[width=0.18\linewidth]{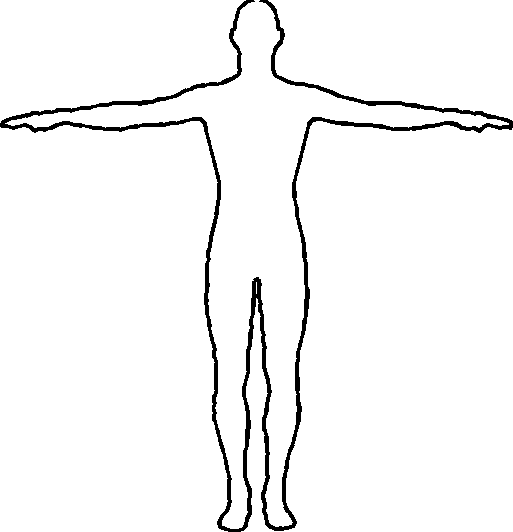}\hspace{20pt}
  \includegraphics[width=0.08\linewidth]{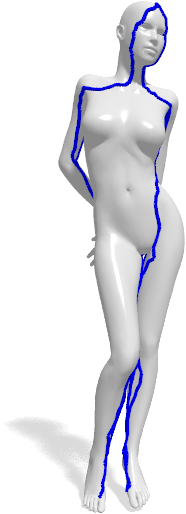}\hspace{5pt}
  \includegraphics[width=0.20\linewidth]{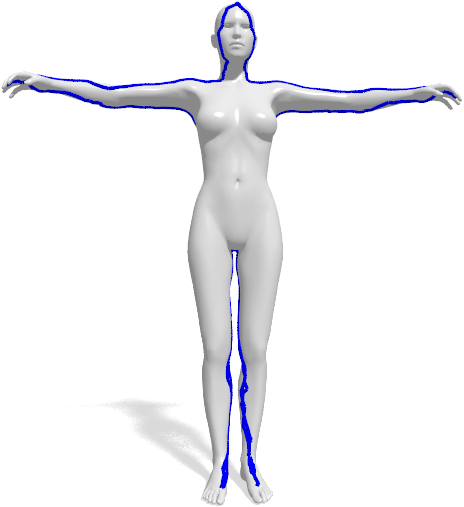}
  \includegraphics[width=0.08\linewidth]{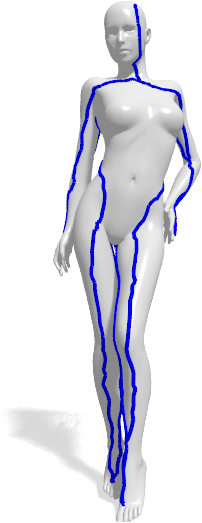}
  \includegraphics[width=0.15\linewidth]{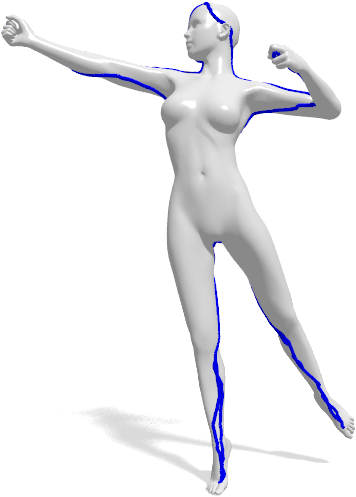}
  \includegraphics[width=0.16\linewidth]{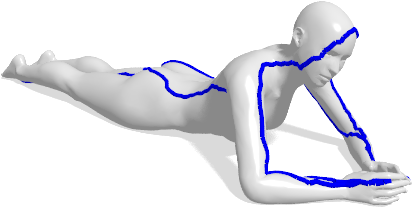}
  \caption{\label{fig:rankings}Retrieval examples on the TOSCA dataset.
    \emph{Left:} Two of the three 2D queries (cat and wolf) have missing
    parts (two legs in contrast to Fig.~\ref{fig:example_matches}). Each
    row shows the top 5 results (ranked by matching energy) provided by
    our method. The corresponding matching curves are shown on top of
    the 3D targets. Note that the dataset only contains 3 wolf shapes,
    which show up as the top 3 matches. The next matches are shapes of
    the class ``dog'', which is a semantically similar class.}
  \end{figure*}

  \section{Conclusion}\label{sec:conclusion}

  We proposed a polynomial-time solution for matching deformable
  planar curves to 3D shapes.  We prove that the worst-case complexity
  of this algorithm is $\cO(mn^2\log(n))$, where $m$ and $n$ denote the
  number of samples on the query curve and the 3D shape respectively.
  We show experimentally that the runtime remains linear with
  respect to $m$, even when employing a branch-and-bound strategy,
  making this a very efficient approach that matches queries with hundreds of
  vertices to 3D shapes with ten thousands of vertices in a few seconds.
  Our algorithm can compute the globally optimal or a $\epsilon$-tight
  solution. The approximate method is more efficient if the features are not discrimative
  enough which causes many iterations in the branch-and-bound strategy.

  Our algorithm provides a powerful tool for shape analysis, and in particular
  has great potential in applications such as 3D shape retrieval from 2D sketches.

  \paragraph{Limitations. }\ The main limitation of our method is the
  assumption that the query 2D shape is a closed planar curve. In
  some situations, this may limit the `expressivity' of the sketch and
  pose a disadvantage compared to image-based approaches in sketch-based
  retrieval applications.
  Furthermore, due to the shortest-path computation the algorithm favors
  solutions having a shorter length on the 3D target shape which are
  not semantically perfect. This makes the correspondence
  not suitable as input for further applications.

  \paragraph{Future research directions. }\ One notable drawback of our
  discrete optimization is the use of Dijkstra's algorithm for finding
  shortest paths on the product manifold, which in some situations may
  not be a consistent discretization of the geodesic distance. In
  follow-up works, we will explore the use of consistent
  fast-marching-like schemes.

\begin{acknowledgement}
    We thank Aneta Stevanovi\'{c} and Matthias Vestner for useful
    discussions. We gratefully acknowledge the support of an Alexander von
    Humboldt Fellowship, the ERC Starting Grant No. 307047 (COMET), the
    ERC Starting Grant 'ConvexVision' and the ERC Consolidator Grant '3D
    Reloaded'.
\end{acknowledgement}

\bibliographystyle{plain}
\bibliography{refs}

\end{document}